\newcommand{\cB}{\mathcal{B}}
\newcommand{\cC}{\mathcal{C}}
\newcommand{\cP}{\mathcal{P}}
\newcommand{\cS}{\mathcal{S}}
\newcommand{\cT}{\mathcal{T}}
\newcommand{\cX}{\mathcal{X}}
\newcommand{\cY}{\mathcal{Y}}
\newcommand{\E}{\mathbf{E}}
\newcommand{\R}{\mathbb{R}}
\newcommand{\X}{\mathbf{X}}
\DeclareMathOperator*{\argmax}{argmax}
\newcommand{\defeq}{\vcentcolon=}
\newtheorem{definition}{Definition}
\newtheorem{proposition}{Proposition}
\title{Differentiable Clustering  with \\Perturbed Spanning Forests}
\author{%
  Lawrence Stewart\thanks{All correspondence should be addressed to \href{mailto:lawrence.stewart@ens.fr}{lawrence.stewart@ens.fr}.} \\
  ENS \& INRIA\\
  Paris, France \\
  \And
  Francis Bach \\
  ENS \& INRIA \\
  Paris, France \\
  \And
  Felipe Llinares-López \\
  Google DeepMind\\
  Paris, France \\
    \And
  Quentin Berthet \\
  Google DeepMind\\
  Paris, France \\
}
\begin{document}

\maketitle

\begin{abstract}
We introduce a differentiable clustering method based on stochastic perturbations of minimum-weight spanning forests. This allows us to include clustering in end-to-end trainable pipelines, with efficient gradients. We show that our method performs well even in difficult settings, such as data sets with high noise and challenging geometries. We also formulate an ad hoc loss to efficiently learn from partial clustering data using this operation. We demonstrate its performance on several data sets for supervised and semi-supervised tasks.\footnote{Code base: \url{https://github.com/LawrenceMMStewart/DiffClust_NeurIPS2023}}.
\end{abstract}

\section{Introduction}\label{sec:intro}

Clustering is one of the most classical tasks in data processing, and one of the fundamental methods in unsupervised learning \citep{hastie2009elements}. In most formulations, the problem consists in partitioning a collection of $n$ elements into $k$ clusters, in a manner that optimizes some criterion, such as intra-cluster proximity, or some resemblance criterion, such as a pairwise similarity matrix. This procedure is naturally related to other tasks in machine learning, either by using these induced classes in supervised problems, or by either evaluating or looking for well-clustered representations \citep{caron2018deep, xie2016unsupervised}. Its performance and flexibility on a wide range of natural dataset, that makes it a good downstream or preprocessing task, also make it a a very important candidate to learn representations in a supervised fashion. Yet, it is a fundamentally ``combinatorial'' problem, representing a discrete decision, much like many classical algorithmic methods (e.g., sorting, taking nearest neighbours, dynamic programming). 

For these reasons, it is extremely challenging to {\em learn through clustering}. As a function, the solution of a clustering problem is piece-wise constant with respect to its inputs (such as a similarity or distance matrix), and its gradient would therefore be zero almost everywhere. This operation is therefore naturally ill-suited to the use of gradient-based approaches to minimize an objective, which are at the center of optimization procedures for machine learning. It does not have the convenient properties of commonly used operations in end-to-end differentiable systems, such as smoothness and differentiability. Another challenge of using clustering as part of a learning pipeline is perhaps its {\em ambiguity}: even for a given notion of distance or similarity between elements, there  are several valid definitions of clustering, with criteria adapted to different uses.  Popular methods include $k$-means, whose formulation is NP-hard in general even in simple settings \citep{drineas2004clustering}, and relies on heuristics that depend heavily on their initialization~\citep{arthur2007k, bubeck2012initialization}. Several of them rely on proximity to a {\em centroid}, or {\em prototype}, i.e., a vector representing each cluster. These fail on challenging geometries, e.g., interlaced clusters with no linear separation.

We propose a new method for differentiable clustering that efficiently addresses these difficulties. It is a principled, deterministic operation: it is based on minimum-weight spanning forests, a variant of minimum spanning trees. We chose this primitive at the heart of our method because it can be represented as a linear program (LP), and this is particularly well-adapted to our smoothing technique. However, we use a greedy algorithm to solve it exactly and efficiently (rather than solving it as an LP or relying on an uncertain heuristic). We observe that this method, often referred to as single linkage clustering~\citep{gower1969minimum}, is effective as a clustering method in several challenging settings. Further, we are able to create a differentiable version of this operation, by introducing stochastic perturbations on the similarity matrix (the cost of the LP). This proxy has several convenient properties: it approximates the original function, it is smooth (both of these are controlled by a temperature parameter), and both the perturbed optimizers and its derivatives can be efficiently estimated with Monte-Carlo estimators \citep[see, e.g.,][and references therein]{hazan2016perturbations, berthet2020learning}. This allows us to include this operation in end-to-end differentiable machine learning pipelines, and we show that this method is both efficient and performs well at capturing the clustered aspect of natural data, in several tasks. 

Our work is part of an effort to include unconventional operations in model training loops based on gradient computations. These include discrete operators such as optimal transport, dynamic time-warping and other dynamic programs \citep{cuturi2013sinkhorn, cuturi2017soft, mensch2018differentiable, blondel2020fast, vlastelica2019differentiation, paulus2020gradient, sander2023fast, Shvetsova_2023_ICCV}, to ease their inclusion in end-to-end differentiable pipelines that can be trained with first-order methods in fields such as computer vision, audio processing, biology, and physical simulators \citep{cordonnier2021differentiable, kumar2021groomed, carr2021self, le2021differentiable, baid2023deepconsensus, llinares2023deep} and other optimization algorithms \citep{dubois2022fast}. Other related methods, including some based on convex relaxations and on optimal transport, aim to minimize an objective involving a neural network in order to perform clustering, with centroids \citep{caron2020unsupervised, genevay2019differentiable, xie2016unsupervised}. Most of them involve optimization {\em in order to} cluster, without explicitly optimizing {\em through} the clustering operation, allowing to learn using some supervision. Another line of research is focused on using the matrix tree-theorem to compute marginals of the Gibbs distribution over spanning trees \citep{mtt_prediction, tree_gradients}. It is related to our perturbation smoothing technique (which yields a different distribution), and can also be used to learn using tree supervision. The main difference is that it does not address the clustering problem, and that optimizing in this setting involves sophisticated linear algebra computations based on determinants, which is not as efficient and convenient to implement as our method based on sampling and greedy algorithms.

There is an important conceptual difference with the methodology described by \citet{berthet2020learning}, and other recent works based on it that use {\em Fenchel-Young losses} \citep{blondel2020learning}:
while one of the core discrete operations on which our clustering procedure is based is an LP, the cluster connectivity matrix--which, importantly, has the same structure as any ground-truth label information--is not. This weak supervision is a natural obstacle that we have to overcome: on the one hand, it is reasonable to expect clustering data to come in this form, i.e., only stating whether some elements belong to the same cluster or not, which is weaker than ground-truth spanning forest information would be. On the other hand, linear problems on cluster connectivity matrices, such as MAXCUT, are notoriously NP-hard \citep{karp72}, so we cannot use perturbed linear oracles over these sets (at least exactly). In order to handle these two situations together, we design a {\em partial Fenchel-Young loss}, inspired by the literature on weak supervision, that allows us to use a loss designed for spanning forest structured prediction, even though we have the less informative cluster connectivity information. In our experiments, we show that our method enables us to learn through a clustering operation, i.e., that we can find representations of the data for which clustering of the data will match with some ground-truth clustering data. We apply this to both supervised settings, including some illustrations on synthetic challenging data, and semi-supervised settings on real data, focusing on settings with a low number of labeled instances and unlabeled classes.

\paragraph{Main Contributions.}
In this work, we introduce an efficient and principled technique for differentiable clustering. In summary, we make the following contributions:
\begin{itemize}[topsep=0pt,itemsep=2pt,parsep=2pt,leftmargin=10pt]

    \item Our method is based on using spanning forests, and a \textbf{differentiable} proxy, obtained by adding \textbf{stochastic perturbations} on the edge costs. 
    
    \item Our method allows us to \textbf{learn through clustering}: one can train a model to learn `clusterable' representations of the data in an \textbf{online} fashion. The model generating the representations is informed by gradients that are transmitted through our clustering operation.

    \item We derive a partial Fenchel-Young loss, which allows us to incorporate \textbf{weak cluster information}, and can be used in any weakly supervised structured prediction problem.


    
    \item We show that it is a powerful clustering technique, allowing us to learn meaningful clustered representations. It \textbf{does not require} a model to learn linearly separable representations. 
    
    \item Our operation can be incorporated efficiently into any gradient-based ML pipeline. We demonstrate this in the context of \textbf{supervised} and \textbf{semi-supervised cluster learning tasks}.
\end{itemize}

\paragraph{Notations.}
For a positive integer $n$, we denote by $[n]$ the set $\{1,\ldots, n\}$. We consider all graphs to be undirected. For the complete graph with $n$ vertices $K_n$ over nodes $[n]$, we denote by $\cT$ the set of {\em spanning trees} on $K_n$, i.e., subgraphs with no cycles and one connected component. For any positive integer $k \le n$ we also denote by $\cC_k$ the set of {\em $k$-spanning forests} of $K_n$, i.e., subgraphs with no cycles and $k$ connected components.  With a slight abuse of notation, we also refer to $\cT$ and $\cC_k$ for the set of {\em adjacency matrices} of these graphs. For two nodes $i$ and $j$ in a general graph, we write $i \sim j$ if the two nodes are in the same connected component. We denote by $\cS_n$ the set of $n \times n$ symmetric matrices.

\section{Differentiable clustering}\label{sec:diffclust}
We provide a \textbf{differentiable} operator for clustering elements based on a similarity matrix. Our method is \textbf{explicit}: it relies on a linear programming primitive, not on a heuristic to solve another problem (e.g., $k$-means). It is \textbf{label blind}: our solution is represented as a connectivity matrix, and is not affected by a permutation of the clusters. Finally it is \textbf{geometrically flexible}: it is based on single linkage, and does not rely on proximity to a centroid, or linear separability to include several elements in the same cluster (as illustrated in Section~\ref{sec:exp}).

In order to cluster $n$ elements in $k$ clusters, we define a clustering operator as a function $M^*_k(\cdot)$ taking an $n \times n$ symmetric similarity matrix as input (e.g., negative pairwise square distances) and outputting a cluster connectivity matrix of the same size (see Definition~\ref{defn:membership_mat}). We also introduce its differentiable proxy $M^*_{k, \varepsilon}(\cdot)$. We use them as a method to \textbf{learn through clustering}. As an example, in the supervised learning setting, we consider the $n$ elements described by features $\X = (X_1,\ldots, X_n)$ in some feature space $\cX$, and ground-truth clustering data as an $n \times n$ matrix $M_\Omega$ (with either complete or partial information, see Definition~\ref{defn:partial_membership_mat}). A parameterized model produces a similarity matrix $\Sigma = \Sigma_w(\X)$--e.g., based on pairwise square distances between embeddings $\Phi_w(X_i)$ for some model~$\Phi_w$. Minimizing in $w$ a loss so that $M^*_k(\Sigma = \Sigma_w(\X)) \approx M_\Omega$ allows to train a model based on the clustering information (see Figure~\ref{fig:model}). We describe in this section and the following one the tools that we introduce to achieve this goal, and show in Section~\ref{sec:exp} experimental results on real data.

\begin{figure}[ht!]
\vspace{-0.2cm}
\begin{center}
    \includegraphics[width=\textwidth]{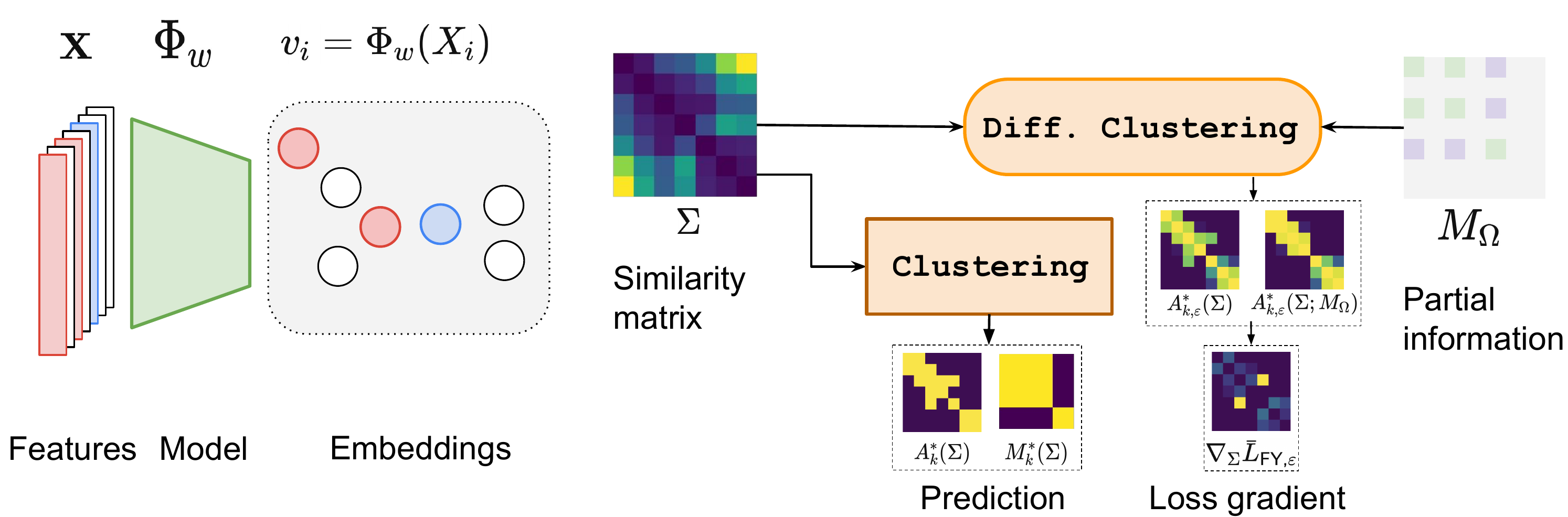}
    \caption{Our pipeline, in the semi-supervised learning setting: data points are embedded by a parameterized model, which produces a similarity matrix. Partial clustering information may be available, in the form of must-link or must-not-link constraints. Our clustering and differentiable clustering operators are used, respectively for prediction and gradient computations.}
    \label{fig:model}
\end{center}
\vskip -0.2in
\end{figure}

\subsection{Clustering with $k$-spanning forests}

In this work, the core operation is to cluster $n$ elements, using a similarity matrix $\Sigma \in \cS_n$. Informally, pairs of elements $(i, j)$ with high similarity $\Sigma_{ij}$ should be more likely to be in the same cluster than those with low similarity. The clustering is represented in the following manner.
\begin{definition}[Cluster connectivity matrix]
\label{defn:membership_mat}
    Let $\pi:[n] \to [k]$ be a clustering function, assigning elements to one of $k$ clusters. We represent it with an $n\times n$ binary  matrix $M$ (the cluster connectivity):
    \[
    M_{ij} = 1 \quad \text{if and only if} \quad \pi(i) = \pi(j)\, .
    \]
    We denote by $\cB_k \subseteq \{0, 1\}^{n \times n}$ the set of binary cluster connectivity matrices with $k$ clusters.
\end{definition}

Using this definition, we define an operation $M^*_k(\cdot)$ mapping a similarity matrix $\Sigma$ to a clustering, in the form of such a membership matrix. Up to a permutation (i.e., naming the clusters), $M$ allows to recover $\pi$ entirely. It is based on a maximum spanning forest primitive $A^*_k(\cdot)$, and both are defined below. We recall that a $k$-forest on a graph is a subgraph with no cycles consisting in $k$ connected components, potentially single nodes (see Figure~\ref{fig:clustering}). The cluster connectivity matrix is sometimes referred to as the cluster coincidence matrix matrix in other literature, and the two terms can be used interchangeably.

\begin{definition}[Maximum $k$-spanning forest]
\label{defn:max_forest}
Let $\Sigma$ be an $n \times n$ similarity matrix. We denote by $A_k^*(\Sigma)$ the adjacency matrix of the $k$-spanning forest with maximum similarity, defined as 
\[
A_k^*(\Sigma) = \argmax_{A \in \cC_k} \langle A, \Sigma \rangle.
\]
This defines a mapping $A^*_k:\cS_n \to \cC_k$. We denote by $F_k$ the value of this maximum, i.e.,
\[
F_k(\Sigma) = \max_{A \in \cC_k} \langle A, \Sigma\rangle.
\]
\end{definition}

\begin{definition}[Spanning forest clustering]
Let $A$ be the adjacency matrix of a $k$-spanning forest. We denote by $M^*(A)$ the connectivity matrix of the $k$ connected components of $A$, i.e.,
    \[
    M_{ij} = 1 \quad \text{if and only if} \quad i\sim j\, .
    \]
Given an $n \times n$ similarity matrix $\Sigma$, we denote by $M^*_k(\Sigma) \in \cB_k$ the clustering induced by the maximum $k$-spanning forest, defined by
\[
M^*_k(\Sigma) = M^*(A_k^*(\Sigma))\, .
\]
This defines a mapping $M^*_k:\cS_n \to \cB_k$, our clustering operator.
\end{definition}

\paragraph{Remarks.} The solution of the linear program is unique almost everywhere for similarity matrices (relevant for us, as we use stochastic perturbations in learning). Both these operators can be computed using Kruskal's algorithm to find a minimum spanning tree in a graph \citep{kruskal1956shortest}. This algorithm builds the tree by iteratively adding edges in a greedy fashion, maintaining non-cyclicity. On a connected graph on $n$ nodes, after $n-1$ edge additions, a spanning tree (i.e., that covers all nodes) is obtained. The greedy algorithm is optimal for this problem, which can be proved by showing that forests can be seen as the independent sets of a \emph{matroid}~\citep{cormen2022introduction}. Further, stopping the algorithm after $n-k$ edge additions yields a forest consisting of $k$ trees (possibly singletons), together spanning the $n$ nodes and which is optimal for the problem in Definition~\ref{defn:max_forest}. As in several other clustering methods, we specify in ours the number of desired clusters~$k$, but a consequence of our algorithmic choice is that one can compute the clustering operator {\em for all~$k$} without much computational overhead, and that this number can easily be tuned by validation.

Further, a common manner to run this algorithm is to keep track of the constructed connected components, therefore both $A^*_k(\Sigma)$ and $M^*_k(\Sigma)$ are actually obtained by this algorithm. The mapping $M^*$ is of course many-to-one, and yields a partition of $\cC_k$ into equivalence classes of $k$-forests that yield the same clusters. This point is actually at the center of our operators of clustering with constraints in Definition~\ref{defn:constr_forest} and of our designed loss in Section~\ref{sec:partial_fy}, both below. We note that as the maximizer of a linear program, when the solution is unique, we have $\nabla_\Sigma F_k(\Sigma) = A^*_k(\Sigma)$.

As described above, our main goal is to enable the inclusion of these operations in end-to-end differentiable pipelines (see Figure~\ref{fig:model}). We include two important aspects to our framework in order to do so, and to efficiently learn through clustering. The first one is the inclusion of constraints, enforced values coming either from partial or complete clustering information (see below, Definition~\ref{defn:constr_forest}), and the second one is the creation of a differentiable version of these operators, using stochastic perturbations (see Section~\ref{sec:pert_cluster}). 

\begin{figure}[ht!]
\begin{center}
    \includegraphics[width=\textwidth]{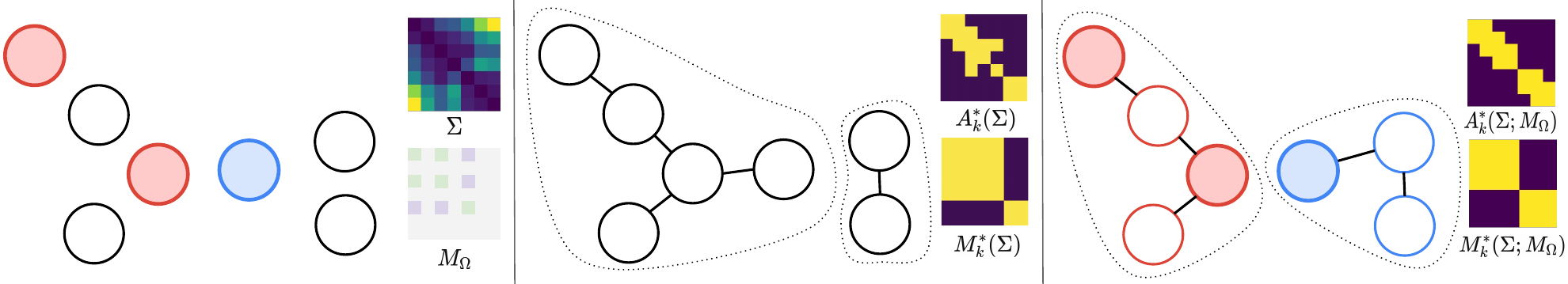}
    \caption{Method illustration, for $k=2$: \textbf{(left)} Similarity matrix based on pairwise square distance, partial cluster connectivity information. \textbf{(center)} Clustering using spanning forests without partial clustering constraints. \textbf{(right)} Constrained clustering with partial constraints.}
    \label{fig:clustering}
\end{center}

\end{figure}

\paragraph{Clustering with constraints.} Given information, we also consider constrained versions of these two problems. We represent the enforced connectivity information as a matrix $M_\Omega$, defined as follows.

\begin{definition}[Partial cluster coincidence matrix]
    \label{defn:partial_membership_mat}
    Let $M$ be a cluster connectivity matrix, and $\Omega$ a subset of $[n] \times [n]$, representing the set of observations. We denote by $M_\Omega$ the $n \times n$ matrix 
    \[
M_{\Omega, ij} = M_{ij} \quad \text{if} \; (i, j) \in \Omega\, ,\quad M_{\Omega, ij} = * \quad \text{otherwise}\, .
    \]
\end{definition}

\textbf{Remarks.} The symbol ``$*$'' in this definition is only used as a placeholder, an indicator of \mbox{``no information''}, and does not have a mathematical use. A common setting is where only a subset $S \subseteq [n]$ of the data has label information. In this case for any $i, j \in S$, $M_{\Omega, ij} = 1$ if $i$ and $j$ share the same label, otherwise $M_{\Omega, ij} = 0$ i.e. elements in the same class are clustered together and separated from elements in other classes.


\begin{definition}[Constrained maximum $k$-spanning forest]
\label{defn:constr_forest}
Let $\Sigma$ be an $n \times n$ similarity matrix. We denote by $A_k^*(\Sigma \, ; M_\Omega)$ the adjacency matrix of
the $k$-spanning forest with maximum similarity, constrained to satisfy the connectivity constraints in $M_\Omega$. It is defined as
\[
A_k^*(\Sigma \, ; M_\Omega) = \argmax_{A \in \cC_k(M_\Omega)} \langle A, \Sigma \rangle\, ,
\]
where for any partial clustering matrix $M_\Omega$, $\cC_k(M_\Omega)$ is the set of $k$-spanning forests whose clusters agree with $M_\Omega$, i.e.,
\[
\cC_k(M_\Omega) = \{A \in \cC_k \, : \, M^*(A)_{ij} = M_{\Omega, ij} \; \forall \, (i, j) \in \Omega\}\, .
\]
For any partial connectivity matrix $M_\Omega$, this defines another mapping $A^*(\, \cdot \,\, ; \,  M_\Omega):\cS_n \to \cC_k$.

We denote by $F_k(\Sigma \, ; M_\Omega)$ the value of this maximum, i.e.,
\[
F_k(\Sigma \, ; M_\Omega) = \max_{A \in \cC_k(M_\Omega)} \langle A, \Sigma\rangle \, .
\]

Similarly we define $M_k^*(\Sigma \, ; M_\Omega) = M^*(A_k^*(\Sigma \, ; M_\Omega))$.

\end{definition}

\paragraph{Remarks.} We consider these operations in order to infer clusterings and spanning forests that are consistent with observed information. That is particularly important when designing a partial loss to learn from observed clustering information. Note that depending on what the set of observations~$\Omega$ is, these constraints can be more or less subtle to satisfy. For certain sets of constraints $\Omega$, when the matroid structure is preserved, we can obtain $A_k^*(\Sigma \, ; M_\Omega)$ by the usual greedy algorithm, by additionally checking that any new edge does not violate the constraints defined by $\Omega$. This is for example the case when $\Omega$ corresponds to exactly observing a fully clustered subset of points.

\subsection{Perturbed clustering}
\label{sec:pert_cluster}
The clustering operations defined above are efficient and perform well (see Figure~\ref{fig:clustering}) but by their nature as discrete operators, they have a major drawback: they are piece-wise constant and as such cannot be conveniently included in end-to-end differentiable pipelines, such as those used to train models such as neural networks. To overcome this obstacle, we use a proxy for our operators, by introducing a {\em perturbed} version \citep{abernethy2016perturbation, berthet2020learning, paulus2020gradient, struminsky2021leveraging}, obtained by taking the expectation of solutions with stochastic additive noise on the input. In these definitions and the following, we consider $Z \sim \mu$ from a distribution with positive, differentiable density over $\cS_n$, and $\varepsilon>0$.

\begin{definition}
    We define the {\em perturbed maximum spanning forest} as the expected maximum spanning forest under stochastic perturbation on the inputs. Formally, for a similarity matrix $\Sigma$, we have
    \[
    A^*_{k, \varepsilon}(\Sigma) = \E[A^*_{k}(\Sigma + \varepsilon Z)] = \E\big[\argmax_{A \in \cC_k} \langle A, \Sigma + \varepsilon Z \rangle\big]\,, \quad F_{k, \varepsilon}(\Sigma) = \E[F_{k}(\Sigma + \varepsilon Z)]\, .
    \]
We define analogously $A^*_{k, \varepsilon}(\Sigma \, ; M_\Omega) = \E[A^*_{k}(\Sigma + \varepsilon Z; M_\Omega)]$ and $F_{k, \varepsilon}(\Sigma \, ; M_\Omega) = \E[F_{k}(\Sigma + \varepsilon Z; M_\Omega)]$, as well as clustering $M^*_{k, \varepsilon}(\Sigma) = \E[M^*_{k}(\Sigma + \varepsilon Z)]$ and $M^*_{k, \varepsilon}(\Sigma \, ; M_\Omega) = \E[M^*_{k}(\Sigma + \varepsilon Z; M_\Omega)]$.

\end{definition}

We note that this defines operations $A^*_{k, \varepsilon}(\cdot)$ and $A^*_{k, \varepsilon}(\, \cdot \, ;\, M_\Omega)$ mapping $\Sigma \in \cS_n$ to $\text{cvx}(\cC_k)$, the {\em convex hull} of $\cC_k$. These operators have several advantageous features: They are differentiable, and both their values and their derivatives can be estimated using Monte-Carlo methods, by averaging copies of $A^*_k(\Sigma + \varepsilon Z^{(i)})$. These operators are the ones used to compute the gradient of the loss that we design to learn from clustering (see Definition~\ref{defn:partial_fy} and Proposition~\ref{pro:partial_grad}).


This is particularly convenient, as it does not require to implement a different algorithm to compute the differentiable version. Moreover, the use of parallelization in modern computing hardware makes the computational overhead almost nonexistent. These methods are part of a large literature on using perturbations in optimizers such as LP solutions \citep{papandreou2011perturb, hazan2013sampling, gane_2014, hazan2016perturbations}, including so-called Gumbel max-tricks \citep{ gumbel1954statistical, maddison2016concrete, jang2017categorical, huijben2022review, blondel2022efficient}

Since $M^*_{k,\varepsilon}(\cdot)$ is a differentiable operator from $\cS_n$ to $\text{cvx}(\cB_k)$, it is possible to use any loss function on $\text{cvx}(\cB_k)$ to design a loss based on $\Sigma$ and some ground-truth clustering information $M_\Omega$, such as $L(\Sigma \, ; M_\Omega) = \|M^*_{k,\varepsilon}(\Sigma) - M_\Omega\|_2^2$. This flexibility is one of the advantages of our method. In the following section, we introduce a loss tailored to be efficient to compute and performant in several learning tasks, that we call a {\em partial Fenchel-Young loss}.


\section{Learning with differentiable clustering}
\label{sec:learning}
\subsection{Fenchel-Young losses}
In structured prediction, a common {\em modus operandi} is to minimize a loss between some structured {\em ground truth response} or {\em label} $y \in \cY$ and a {\em score} $\theta \in \R^d$ (the latter often itself the output of a parameterized model). As an example, in logistic regression $y \in \{0, 1\}$ and $\theta = \langle x, \beta \rangle \in \R$. The framework of {\em Fenchel-Young} losses allows to tackle much more complex structures, such as cluster connectivity information.

\begin{definition}[Fenchel-Young loss--\citet{blondel2020learning}]
    Let $F$ be a convex function on $\R^d$, and $F^*$ its Fenchel dual on a convex set $\cC \subset \R^d$. The Fenchel-Young loss between $\theta \in \R^d$ and $y \in \text{int}(\cC)$ is 
\[
L_{{\sf FY}}(\theta; y) = F(\theta) - \langle \theta, y\rangle + F^*(y)\, .
\]
\end{definition}
The Fenchel-Young (FY) loss satisfies several properties, making it useful for learning. In particular, it is nonnegative, convex in $\theta$, handles well noisy labels, and its gradient with respect to the score can be efficiently computed, with $\nabla_\theta L_{\sf FY}(\theta; y) = \nabla_\theta F(\theta) - y$ \citep[see, e.g.,][]{blondel2020learning}.



In the case of linear programs over a polytope $\cC$, taking $F$ to be the so-called support function defined as $F(\theta) = \max_{y \in \cC} \langle y, \cC \rangle$ (consistent with our notation so far), the dual function $F^*$ is the indicator function of $\cC$ \citep{rockafellar1997convex}, and the Fenchel-Young function is then given by
\[
L_{\sf FY}(\theta, y) = \begin{cases}
    F(\theta) - \langle \theta, y\rangle \quad \text{if $y \in \cC$}\, ,\\
    +\infty \quad \text{otherwise}\, .
\end{cases} 
\]
In the case of a perturbed maximum for $F$, see, e.g., \citet{berthet2020learning}, we have
\[
F_\varepsilon (\theta) = \E[\max_{y \in \cC} \langle y, \theta + \varepsilon Z \rangle]\, , \quad \text{and} \quad y^*_\varepsilon (\theta) = \E[\argmax_{y \in \cC} \langle y, \theta + \varepsilon Z \rangle]\, .
\]
In this setting (under mild regularity assumptions on the noise distribution and $\cC$), we have that $\varepsilon \Omega = F_\varepsilon^*$ is a strictly convex Legendre function on $\cC$ and $y^*_\varepsilon(\theta) = \nabla_\theta F_\varepsilon (\theta)$. This is actually equivalent to having $\max$ and $\argmax$ with a $-\varepsilon \Omega(y)$ regularization \citep[see][Propositions 2.1 and 2.2]{berthet2020learning}. In this case, the FY loss is given by $L_{\sf FY, \varepsilon}(\theta; y) = F_\varepsilon(\theta) - \langle \theta, y\rangle + \varepsilon \Omega(y)$ and its gradient by $\nabla_\theta L_{\sf FY, \varepsilon}(\theta; y) = y_\varepsilon^*(\theta) - y$. One can also define it as $L_{\sf FY, \varepsilon}(\theta; y) = \E[L_{\sf FY}(\theta + \varepsilon Z; y)]$ and it has the same gradient. In the perturbations case, it can be easily obtained by Monte-Carlo estimates of the perturbed optimizer, taking $\frac 1B \sum_{i=1}^B y^*(\theta +\varepsilon Z_{i}) - y$.

\subsection{Partial Fenchel-Young losses}
\label{sec:partial_fy}

As noted above, this loss is widely applicable in structured prediction. As presented here, it requires label data $y$ of the same kind than the optimizer $y^*$. In our setting, the linear program is over spanning $k$-forests rather than connectivity matrix. This is no accident: linear programs over cut matrices (when $k=2$) are already NP-hard \citep{karp72}. If one has access to richer data (such as ground-truth spanning forest information), one can ignore the operator $M^*$ and focus only on $A^*_{k, \varepsilon}$, $F_{k, \varepsilon}$, and the associated FY-loss. However, in most cases, clustering data can reasonably only be expected to be present as connectivity matrix. It is therefore necessary to alter the Fenchel-Young loss, and we introduce the following loss, which allows to work with partial information $p\in \cP$ representing partial information about the unknown ground-truth $y$. Using this kind of ``inf-loss'' is common when dealing with partial label information \citep[see, e.g.,][]{cabannnes2020structured}.

\begin{definition}[Partial Fenchel-Young loss] 
Let $F$ be a convex function, $L_{{\sf FY}}$ the associated Fenchel-Young loss, and for every $p\ \in \cP$ a convex constraint subset $\cC(p) \subseteq \cC$.
\label{defn:partial_fy}
\[
\bar L_{\sf FY}(\theta; p) = \min_{y\in \cC(p)} L_{{\sf FY}}(\theta; y)\, .
\]
\end{definition}

This allows to learn from incomplete information about $y$. When we do not know its value, but only a subset of $\cC(p) \subseteq \cC$ to which it might belong, we can minimize the infimum of the FY losses that are consistent with the partial label information $y \in \cY(p)$.

\begin{proposition}
\label{pro:partial_grad}
    When $F$ is the support function of a compact convex set $\cC$, the partial Fenchel-Young loss (see Definition~\ref{defn:partial_fy}) satisfies
    \begin{enumerate}
        \item The loss is a difference of convex functions in $\theta$ given explicitly by
        \[
        \bar L_{\sf FY}(\theta; p) = F(\theta) - F(\theta; p)\,, \quad \text{where} \quad F(\theta; p) = \max_{y \in \cC(p)} \langle y, \theta \rangle,
        \]
        \vspace{-0.3cm}
        \item The gradient with respect to $\theta$ is given by 
        \[
        \nabla_\theta \bar L_{\sf FY}(\theta; p) = y^*(\theta) - y^*(\theta; p)\,, \quad \text{where} \quad y^*(\theta; p) = \argmax_{y \in \cC(p)} \langle y, \theta \rangle.
        \]
        \vspace{-0.3cm}
        \item The perturbed partial Fenchel-Young loss given by $\bar L_{\sf FY, \varepsilon}(\theta; p) = \E[\bar L_{\sf FY}(\theta + \varepsilon Z; p)]$ satisfies
        \[
        \nabla_\theta \bar L_{\sf FY, \varepsilon}(\theta; p) = y_\varepsilon^*(\theta) - y_\varepsilon^*(\theta; p)\,, \quad \text{where} \quad y^*_\varepsilon(\theta; p) = \E[\argmax_{y \in \cC(p)} \langle y, \theta + \varepsilon Z \rangle]\, .
        \]
    \end{enumerate}
\end{proposition}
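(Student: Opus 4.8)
The plan is to derive all three claims from the definition of the Fenchel–Young loss with $F$ a support function, where we already know from the excerpt that $F^*$ is the indicator $\iota_\cC$, so that $L_{\sf FY}(\theta;y) = F(\theta) - \langle \theta, y\rangle$ for $y \in \cC$ and $+\infty$ otherwise. First, for claim (1), I would simply substitute this into Definition~\ref{defn:partial_fy}: since $\cC(p) \subseteq \cC$, every $y \in \cC(p)$ has finite $L_{\sf FY}(\theta;y)$, so
\[
\bar L_{\sf FY}(\theta,p) = \min_{y \in \cC(p)}\big(F(\theta) - \langle \theta, y\rangle\big) = F(\theta) - \max_{y \in \cC(p)} \langle \theta, y\rangle = F(\theta) - F(\theta;p).
\]
The term $F(\theta;p)$ is itself a support function of a compact convex set (the maximum of a linear functional over $\cC(p)$), hence convex in $\theta$; $F$ is convex by hypothesis; so $\bar L_{\sf FY}$ is a difference of convex functions, giving claim (1).

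For claim (2), I would differentiate the identity from (1) in $\theta$. We have $\nabla_\theta F(\theta) = y^*(\theta) = \argmax_{y \in \cC}\langle y,\theta\rangle$ (noted in the excerpt, valid where the maximizer is unique — i.e., almost everywhere), and likewise Danskin's theorem / the envelope theorem applied to $F(\theta;p) = \max_{y \in \cC(p)}\langle y,\theta\rangle$ gives $\nabla_\theta F(\theta;p) = y^*(\theta;p) = \argmax_{y\in\cC(p)}\langle y,\theta\rangle$ wherever that maximizer is unique. Subtracting yields $\nabla_\theta \bar L_{\sf FY}(\theta,p) = y^*(\theta) - y^*(\theta;p)$. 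I would remark that both support functions are differentiable off a Lebesgue-null set (the non-uniqueness locus of an LP over a polytope), which is the only caveat needed here.

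For claim (3), I would pass the gradient through the expectation. Writing $\bar L_{\sf FY,\varepsilon}(\theta;p) = \E[\bar L_{\sf FY}(\theta + \varepsilon Z;p)]$ and using the perturbation calculus of \citet{berthet2020learning}: since $Z$ has a positive differentiable density, the map $\theta \mapsto \E[\bar L_{\sf FY}(\theta+\varepsilon Z;p)]$ is differentiable even though $\bar L_{\sf FY}$ itself is only differentiable a.e., and one may interchange $\nabla_\theta$ and $\E$ (the null set of non-differentiability has measure zero under the absolutely continuous perturbed law). This gives $\nabla_\theta \bar L_{\sf FY,\varepsilon}(\theta,p) = \E[y^*(\theta+\varepsilon Z) - y^*(\theta+\varepsilon Z;p)] = y^*_\varepsilon(\theta) - y^*_\varepsilon(\theta;p)$, the last equality being the definitions of the perturbed optimizers. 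Alternatively, I would note one can reach the same conclusion by the integration-by-parts / score-function identity (differentiating the density rather than the integrand), which is the route that also justifies Monte-Carlo gradient estimation; either argument is standard.

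The main obstacle — really the only subtlety — is the a.e. differentiability issue: the raw partial FY loss is piecewise linear in $\theta$ and so has kinks, so claim (2) holds only off a null set, and claim (3) requires justifying the interchange of differentiation and expectation. I would handle both by invoking the absolute continuity of the perturbation distribution and the results of \citet{berthet2020learning} (Propositions~2.1–2.2 there), which is exactly the setting already assumed in the paper; everything else is a one-line substitution and an application of Danskin's theorem.
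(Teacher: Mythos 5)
Your proposal is correct and follows essentially the same route as the paper's proof: expand the minimum of the Fenchel--Young loss over $\cC(p)$ into $F(\theta) - \max_{y\in\cC(p)}\langle y,\theta\rangle$, differentiate using the fact that the gradient of a support function is its argmax (valid almost everywhere), and pass the gradient through the expectation for the perturbed version. Your additional remarks on the difference-of-convex structure and on justifying the interchange of $\nabla_\theta$ and $\E$ via the absolute continuity of the perturbation are welcome elaborations of points the paper treats more tersely, but they do not change the argument.
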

Another possibility would be to define the partial loss as 
$  \min_{y \in \cC(p)} L_{\sf FY,\varepsilon}(\theta;y)$, that is, the infimum of smoothed losses instead of the smoothed infimum loss $L_{\sf FY, \varepsilon}(\theta; p)$ defined above. However, there is no direct method to minimizing the smoothed loss with respect to $y \in \cC(p)$. Note that we have a relationship between the two losses: 

\begin{proposition} Letting $L_{\sf FY,\varepsilon}(\theta;y) = \E[L_{\sf FY,\varepsilon}(\theta + \varepsilon Z;y)]$ and $\bar L_{\sf FY, \varepsilon}$ as in Definiton~\ref{defn:partial_fy}, we have
\label{pro:ineq_fy}
    \[
\bar L_{\sf FY, \varepsilon}(\theta; p) \le   \min_{y \in \cC(p)} L_{\sf FY,\varepsilon}(\theta;y)\, .
\]
\end{proposition}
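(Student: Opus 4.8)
The plan is to combine the two representations of the smoothed losses as expectations over the perturbation $Z$ with the elementary fact that the expectation of a pointwise minimum is at most the pointwise minimum of the expectations. First I would fix an arbitrary $y \in \cC(p)$. By Definition~\ref{defn:partial_fy}, for every realization of the noise $Z$ we have $\bar L_{\sf FY}(\theta + \varepsilon Z; p) = \min_{y' \in \cC(p)} L_{\sf FY}(\theta + \varepsilon Z; y') \le L_{\sf FY}(\theta + \varepsilon Z; y)$, simply because $y$ is a feasible competitor in the minimization; note all these quantities are finite since $F$ is the support function of a compact convex set (hence finite and Lipschitz) and $F^*$ is the indicator of $\cC$, which vanishes on $\cC(p) \subseteq \cC$.

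Next I would take expectations over $Z \sim \mu$ on both sides of this pointwise inequality. By the definition of the smoothed losses, $\bar L_{\sf FY, \varepsilon}(\theta; p) = \E[\bar L_{\sf FY}(\theta + \varepsilon Z; p)]$ (as in Proposition~\ref{pro:partial_grad}, item 3) and $L_{\sf FY, \varepsilon}(\theta; y) = \E[L_{\sf FY}(\theta + \varepsilon Z; y)]$, so monotonicity of the expectation gives $\bar L_{\sf FY, \varepsilon}(\theta; p) \le L_{\sf FY, \varepsilon}(\theta; y)$. Since this holds for every $y \in \cC(p)$ and the left-hand side does not depend on $y$, taking the minimum over $y \in \cC(p)$ on the right yields the claimed inequality $\bar L_{\sf FY, \varepsilon}(\theta; p) \le \min_{y \in \cC(p)} L_{\sf FY, \varepsilon}(\theta; y)$.

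There is no genuinely hard step here; the only points needing (routine) care are integrability of $F(\theta + \varepsilon Z)$, guaranteed by the Lipschitzness of the support function together with a finite-first-moment assumption on $\mu$, and the measurability of $Z \mapsto \bar L_{\sf FY}(\theta + \varepsilon Z; p)$, which holds because it is the minimum over the compact set $\cC(p)$ of the jointly continuous map $(z, y) \mapsto L_{\sf FY}(\theta + \varepsilon z; y)$ and is therefore itself continuous in $z$. The inequality is typically strict precisely because the minimum and the expectation do not commute — this is the gap alluded to just before the statement, and it is the reason one prefers the smoothed infimum loss $\bar L_{\sf FY, \varepsilon}$ over the infimum of smoothed losses.
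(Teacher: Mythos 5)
Your proof is correct and follows essentially the same route as the paper's: the paper invokes Jensen's inequality (concavity of the minimum) to write $\E[\min_{y\in \cC(p)} L_{\sf FY}(\theta+\varepsilon Z;y)] \le \min_{y\in \cC(p)} \E[L_{\sf FY}(\theta+\varepsilon Z;y)]$, which is exactly the exchange of expectation and minimum that you establish by fixing a feasible $y$, using the pointwise bound, and then minimizing over $y$. Your added remarks on finiteness and measurability are routine but harmless refinements of the same argument.
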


The proofs of the above two propositions are detailed in the Appendix.

\subsection{Applications to differentiable clustering}

We apply this framework, as detailed in the following section and in Section~\ref{sec:exp}, to clustering. This is done naturally by transposing notations and taking $\cC = \cC_k$, $\theta = \Sigma$, $y^* = A^*_k$, $p = M_\Omega$, $\cC(p) = \cC_k(M_\Omega)$, and $y^*(\theta ; p) = A^*_k(\Sigma \, ; M_\Omega)$. In this setting the perturbed partial Fenchel-Young loss satisfies
\[
 \nabla_\Sigma \bar L_{\sf FY, \varepsilon}(\Sigma \, ; M_\Omega) = A_{k, \varepsilon}^*(\Sigma) - A_{k, \varepsilon}^*(\Sigma \, ; M_\Omega)\,.
\]

We learn representations of a data that fit with clustering information (either complete or partial). As described above, we consider settings with $n$ elements described by their features $\X = X_1,\ldots, X_n$ in some feature space $\cX$, and $M_\Omega$ some clustering information. Our pipeline to learn representations includes the following steps (see Figure~\ref{fig:model})
\begin{itemize}[topsep=0pt,itemsep=2pt,parsep=2pt,leftmargin=10pt]
    \item[{\em i)}] Embed each $X_i$ in $\R^d$ with a parameterized model $v_i = \Phi_w(X_i) \in \R^d$, with weights $w \in \R^p$.
    \item[{\em ii)}] Construct a similarity matrix from these embeddings, e.g. $\Sigma_{w, ij} = - \|\Phi_w(X_i) - \Phi_w(X_j)\|_2^2$.
    \item[{\em iii)}] Stochastic optimization of the expected loss of $\bar L_{\sf FY, \varepsilon}(\Sigma_{w, b}, M_{\Omega, b})$, using mini-batches of $\X$.
\end{itemize}

\paragraph{Details.} To be more specific on each of these phases: {\em i)} We embed each $X_i$ individually with a model $\Phi_w$, using in our application neural networks and a linear model. This allows us to use learning through clustering as a way to learn representations, and to apply this model to other elements, for which we have no clustering information, or for use in other downstream tasks. {\em ii)} We focus on cases where the similarity matrix is the negative squared distances between those embeddings. This creates a connection between a model acting individually on each element, and a pairwise similarity matrix that can be used as an input for our differentiable clustering operator. This mapping, from $w$ to $\Sigma$, has derivatives that can therefore be automatically computed by backpropagation, as it contains only conventional opereations (at least when $\Phi_w$ is itself a conventional model, such as commonly used neural networks). {\em iii)} We use our proposed smoothed partial Fenchel-Young (Section~\ref{sec:partial_fy}) as the objective to minimize between the partial information $M_\Omega$ and $\Sigma$. The full-batch version would be to minimize $L_{\sf FY, \varepsilon}(\Sigma_w, M_{\Omega})$ as a function of the parameters $w \in \R^p$ of the model. We focus instead on a mini-batch formulation for two reasons: first, stochastic optimization with mini-batches is a commonly used and efficient method for generalization in machine learning; second, it allows to handle larger-scale datasets. As a consequence, the stochastic gradients of the loss are given, for a mini-batch $b$, by 
\[
\nabla_w \bar L_{\sf FY, \varepsilon}(\Sigma_{w, b}; M_{\Omega, b}) = \partial_w \Sigma_w\cdot \nabla_\Sigma \bar L_{\sf FY, \varepsilon}(\Sigma_{w, b}; M_{\Omega, b}) = \partial_w \Sigma_w\cdot \big(A_{k , \varepsilon}^*(\Sigma_w) - A_{k , \varepsilon}^*(\Sigma_w; M_{\Omega, b})\big)\, .
\]
The simplicity of these gradients is due to our particular choice of smoothed partial Fenchel-Young loss. They can be efficiently estimated automatically, as described in Section~\ref{sec:pert_cluster}, which results in a doubly stochastic scheme for the loss optimization.

\section{Experiments}
\label{sec:exp}
\begin{figure*}[b]

  \begin{minipage}[b]{0.65\linewidth}
    \includegraphics[width=\linewidth]{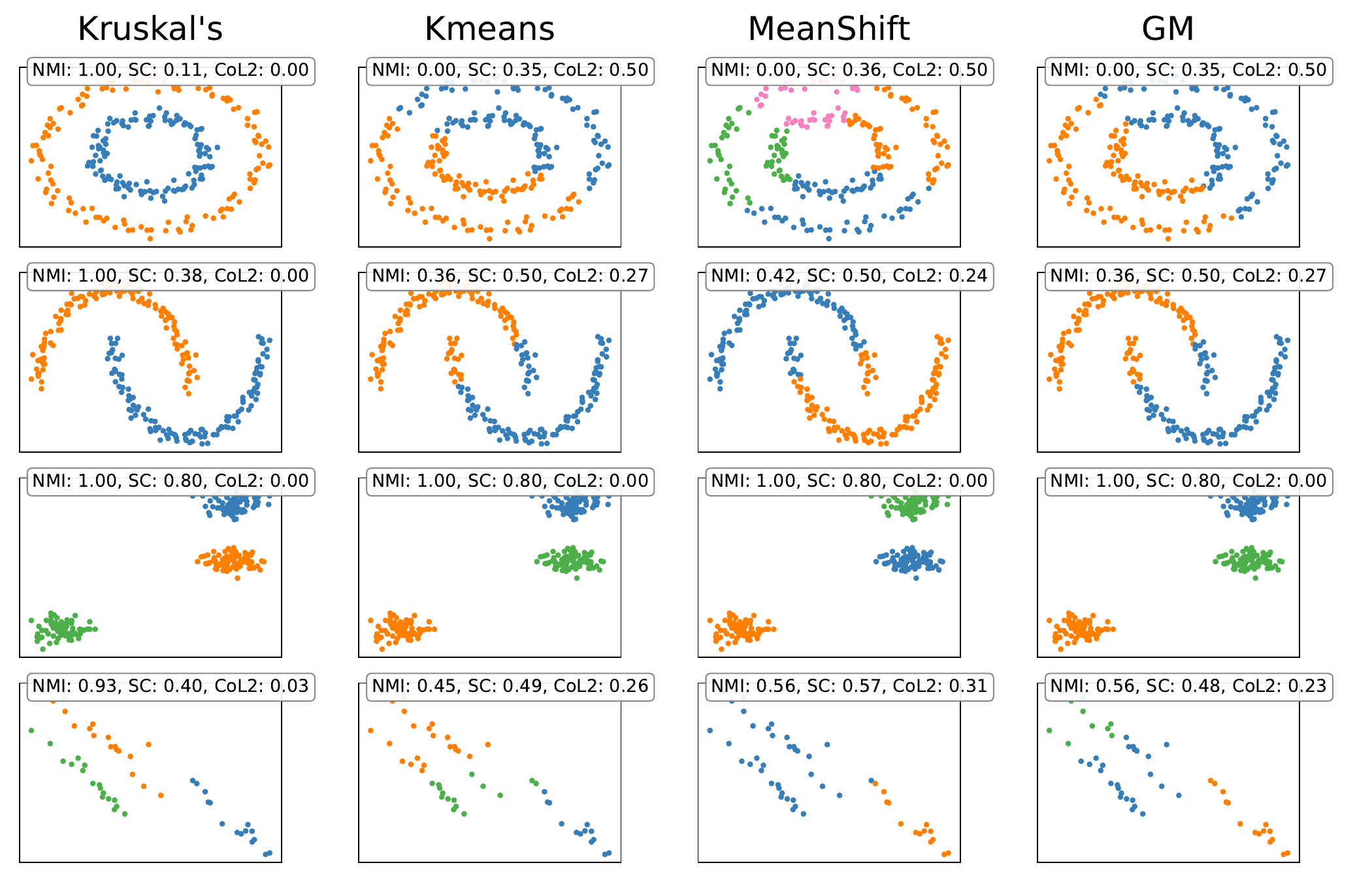}

  \end{minipage}\hspace{3.4em}
  \begin{minipage}[b]{0.27\linewidth}
    \begin{subfigure}[t]{0.65\linewidth}
        \hspace{0.5em}
      \includegraphics[width=0.9\linewidth]{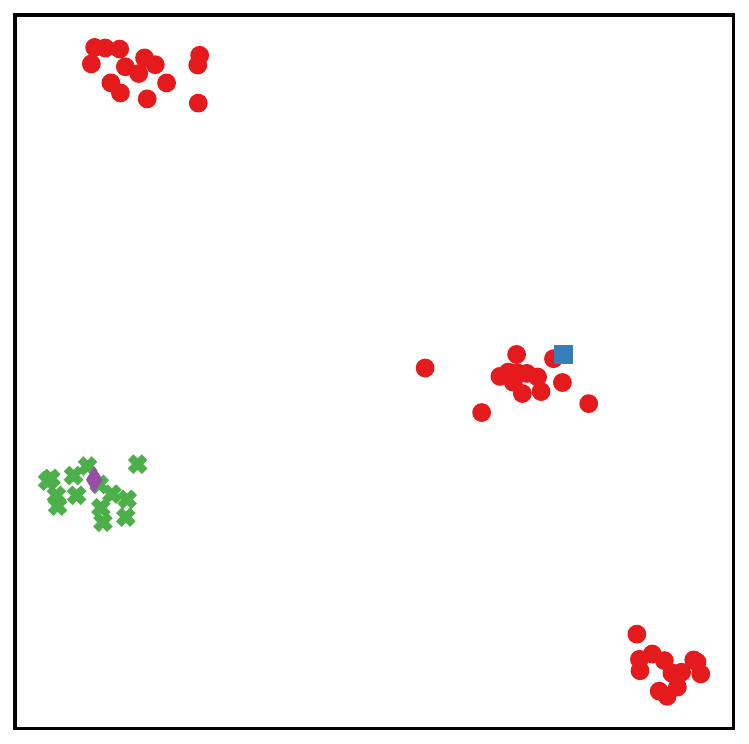}
    \end{subfigure}\hfill 
    \vspace{3em}
    \begin{subfigure}[t]{0.65\linewidth}
        \hspace{0.5em}
      \includegraphics[width=0.9\linewidth]{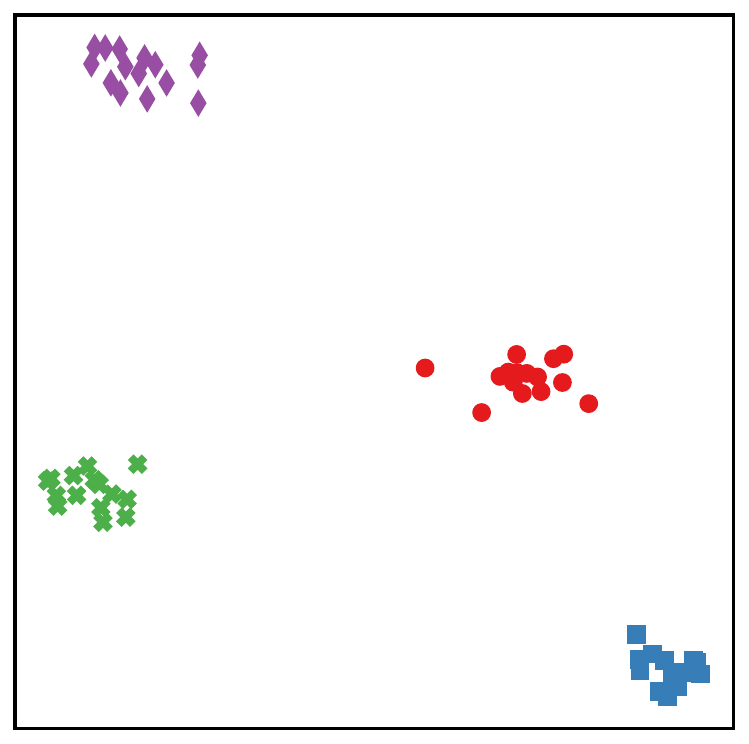}
    \end{subfigure}
  \end{minipage}
  \vspace{0.2em}
     \caption{\textbf{(left)} Illustration of clustering methods on toy-data sets.  \textbf{(right)} Using cluster information to learn a linear de-noising \textbf{(bottom)} of a noised signal \textbf{(top)}.}
     \label{fig:clusteringmethods}
\end{figure*}




We apply our framework for learning through clustering in both a supervised and a semi-supervised setting, as illustrated in Figure~\ref{fig:model}. Formally, for large training datasets of size $n$, we either have access to a full cluster connectivity matrix $M_\Omega$ or a partial one (typically built by using partial label information, see below).  We use this clustering information $M_\Omega$, from which mini-batches can be extracted, as supervision. We minimize our partial Fenchel-Young loss with respect to the weights of an embedding model, and evaluate these embeddings in two main manners on a test dataset: first, by evaluating the clustering accuracy (i.e. proportion of correct coefficients in the predicted cluster connectivity matrix), and second by training a shallow model on a classification task (using clusters as classes) on a holdout set, evaluating it on a test set.

\begin{figure*}[t]
     \centering
     \begin{subfigure}[h]{0.47\textwidth}
         \centering
         \includegraphics[width=\textwidth]{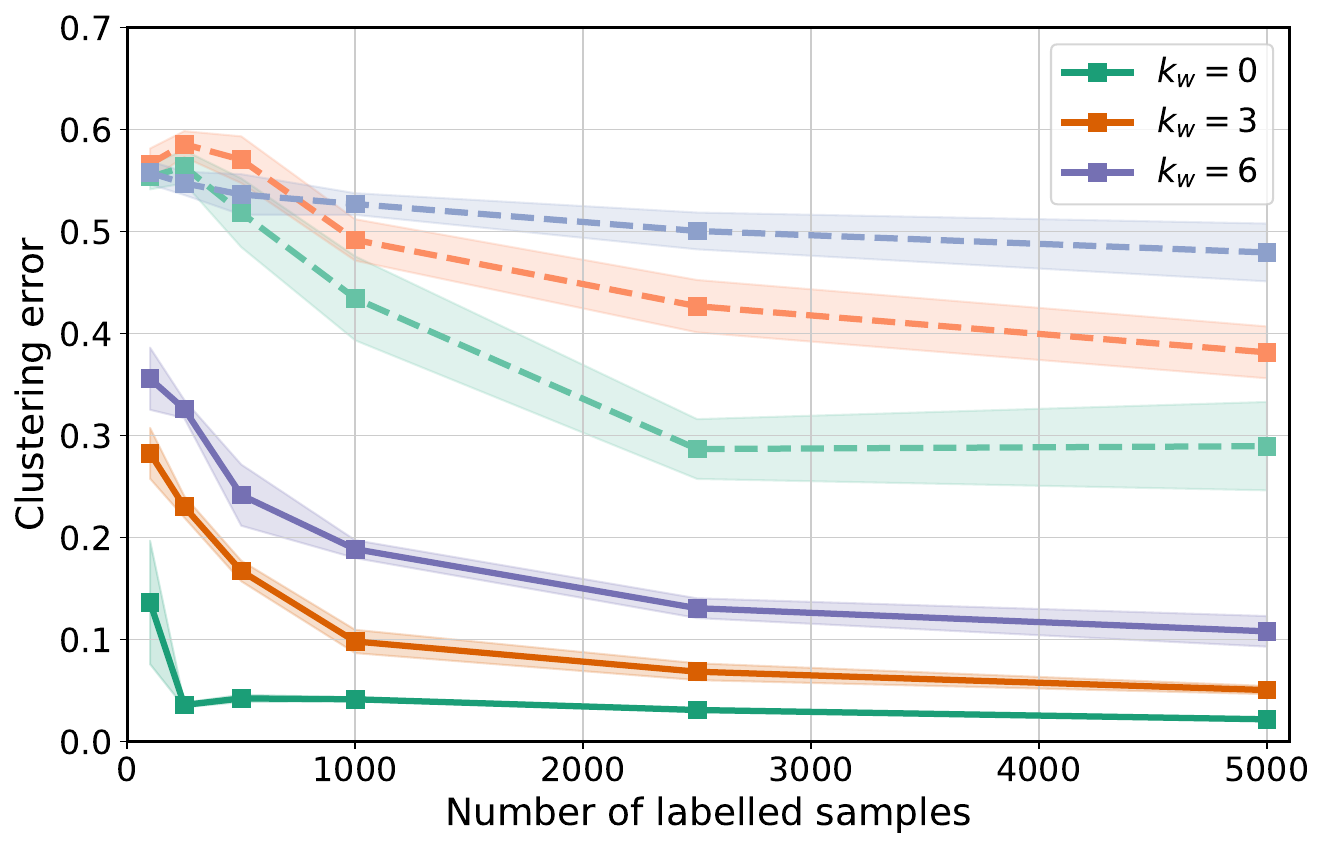}
         \label{subfig:mnist_ssl_cluster}
     \end{subfigure}
     \hfill
     \begin{subfigure}[h]{0.47\textwidth}
         \centering
         \includegraphics[width=\textwidth]{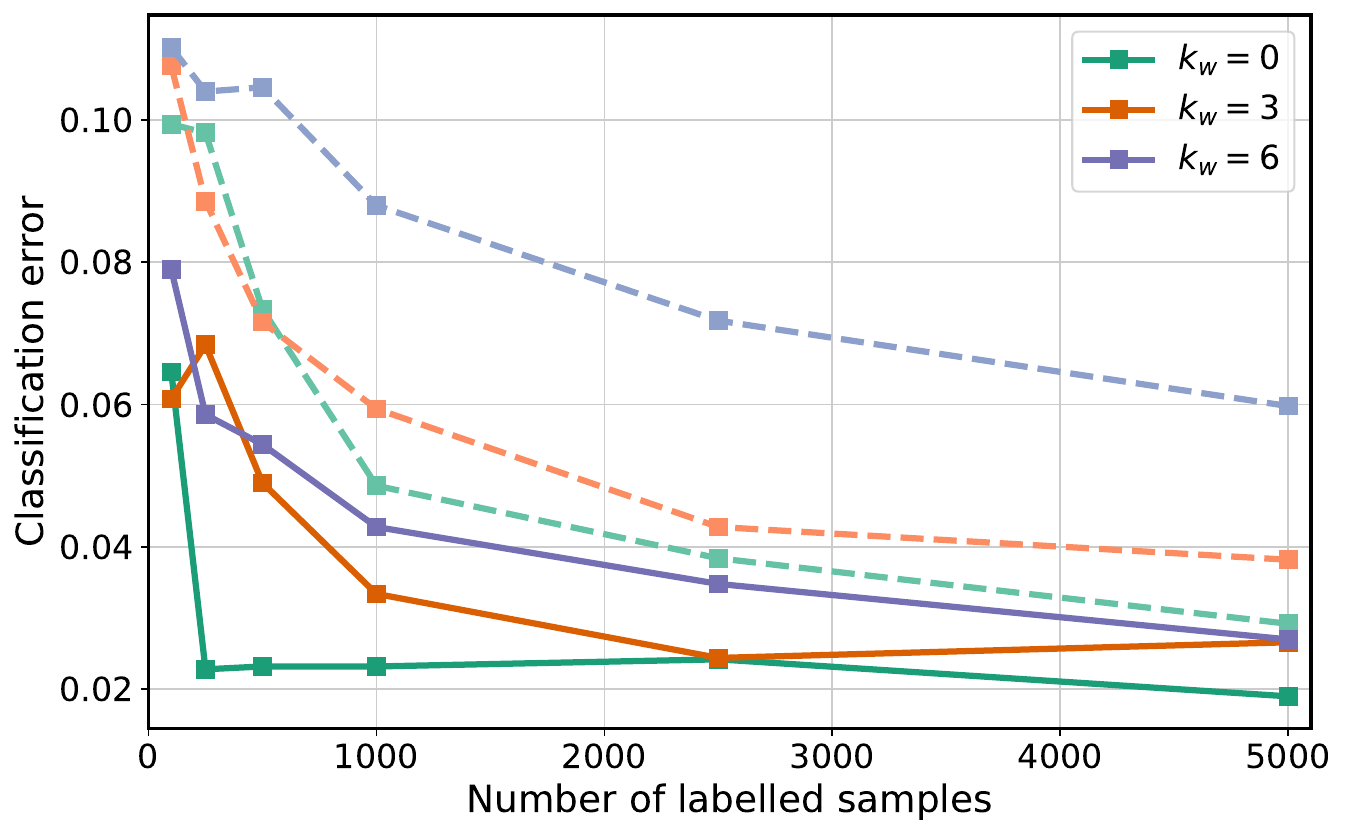}
         \label{subfig:mnist_ssl_classif}
     \end{subfigure}
     \hfill
     \vspace{-0.3cm}

     \centering
     \begin{subfigure}[h]{0.47\textwidth}
         \centering
         \includegraphics[width=\textwidth]{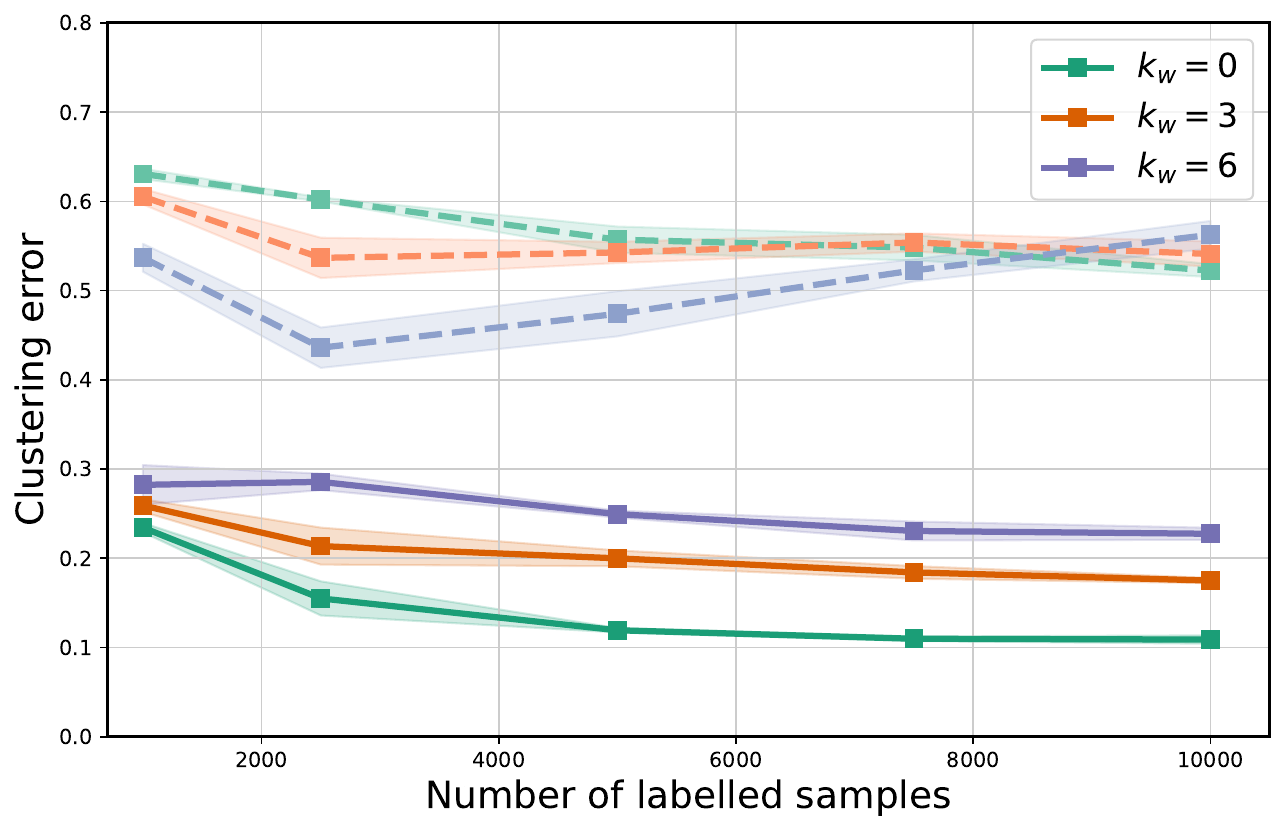}
         \label{subfig:cifar10_ssl_clust_err}
     \end{subfigure}
     \hfill
     \begin{subfigure}[h]{0.51\textwidth}
         \centering
         \includegraphics[width=\textwidth]{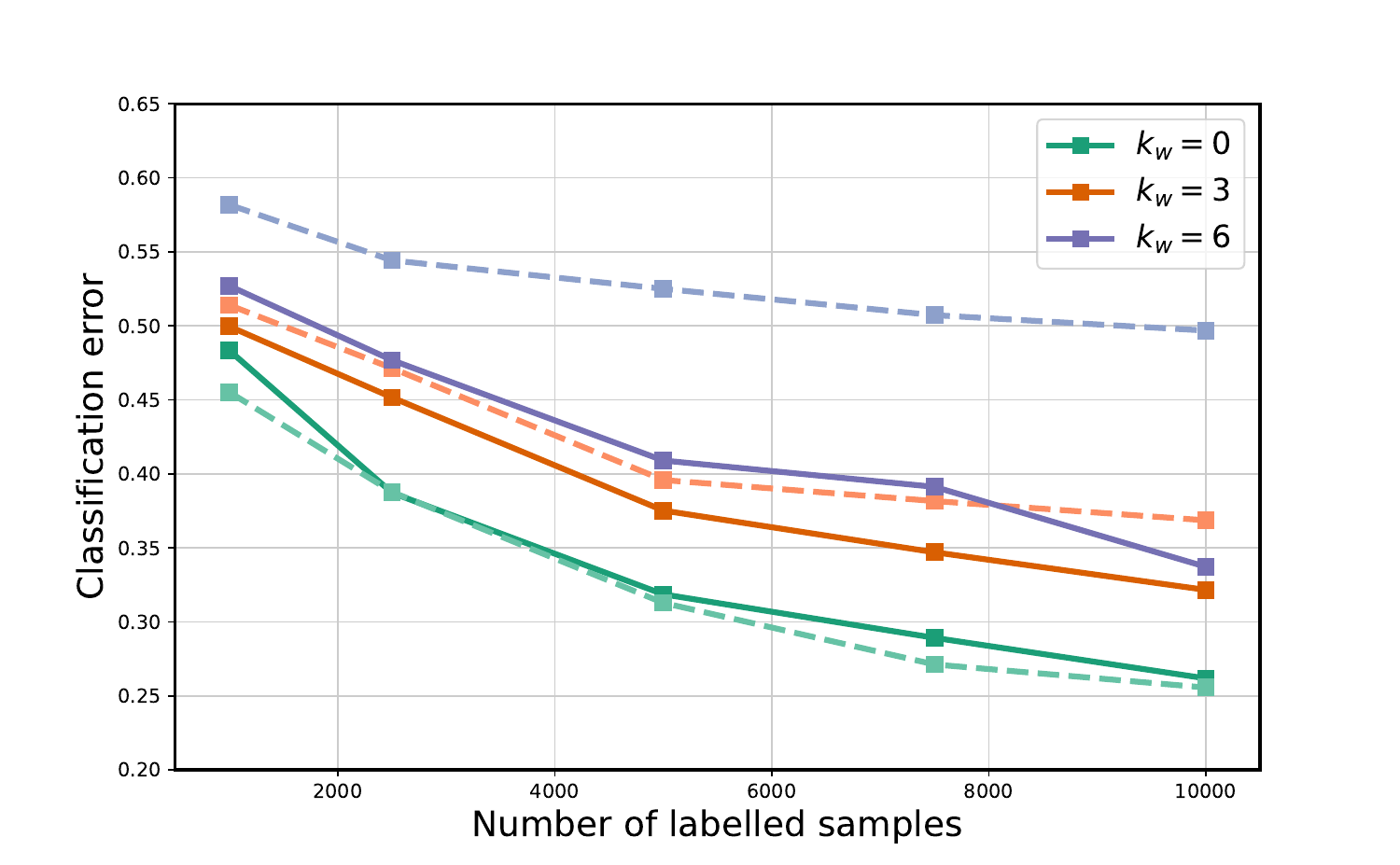}
         \label{subfig:cifar10_ssl_class_err}
     \end{subfigure}
     \hfill
     \vspace{-0.3cm}
     \caption{\textbf{Semi-supervised learning}: Performance of a CNN trained on partially labeled MNIST data \textbf{(top row)}, and a ResNet trained on partially labeled Cifar-10 data \textbf{(bottom row)}. Our trained model (full line) is evaluated on clustering \textbf{(left column)} and compared to a model entirely trained on the classification task (dashed line). Both models are evaluated on a down-stream classification problem i.e. transfer learning, via a linear probe \textbf{(right column)}.} \vspace{-0.75em}
     \label{fig:semi-supervised-exps}
\end{figure*}

\subsection{Supervised learning}\label{sec:supervised}
We apply first our method to synthetic datasets - purely to provide an illustration of both our internal clustering algorithm, and of our learning procedure. In Figure~\ref{fig:clustering}, we show how the clustering operator that we use, based on spanning forests (i.e. single linkage), with Kruskal's algorithm is efficient on some standard synthetic examples, even when they are not linearly separable (compared to $k$-Means, mean-shift, Gaussian mixture-model). We also show that our method allows us to perform supervised learning based on cluster information, in a linear setting. For $X_{\text{signal}}$ consisting of $n=60$ points in two dimensions consisting of data in four well-separated clusters (see Figure~\ref{fig:clustering}), we construct $X$ by appending two noise dimensions, such that clustering based on pairwise square distances between $X_i$ mixes the original clusters. We learn a linear de-noising transformation $X\theta$, with $\theta \in \R^{4 \times 2}$ through clustering, by minimizing our perturbed partial FY loss with SGD, using the known labels as supervision.

We also show that our method is able to cluster virtually all of some classical datasets. We train a CNN (LeNet-5 \cite{lecun1998gradient}) on mini-batches of size 64 using the partial Fenchel-Young loss to learn a clustering, with a batch-wise clustering precision of 0.99 for MNIST and $0.96$ on Fashion MNIST. Using the same approach, we trained a ResNet \citep{he2016deep} on CIFAR-10 (with some minor modifications to kernel size and stride), achieving a batch-wise clustering test precision of $0.93$. The t-SNE visualization of the embeddings for the validation set are displayed in Figure \ref{fig:tsnes}. The experimental setups (as well as visualization of learnt clusters for MNIST), are detailed in the Appendix. 

\subsection{Semi-supervised learning}\label{sec:semisupervised}

We show that our method is particularly useful in settings where labelled examples are scarce, even in the particularly challenging case of having no labelled examples for some classes. To this end, we conduct a series of experiments on the MNIST \citep{lecun2010mnist} and Cifar-10 \citep{krizhevsky2009learning} datasets, where we independently vary both the total number of labelled examples $n_\ell$, as well as the number of withheld classes for which no labelled examples are present in the training set $k_w \in \{0, 3, 6\}$. For MNIST we consider data sets with $n_\ell \in \{100, 250, 500, 1000, 2000, 5000\}$ labelled examples, and for Cifar-10 we consider $n_\ell \in \{1000, 2500, 5000, 7500, {10,000}\}$.

We train the same embedding models using our partial Fenchel-Young loss with batches of size 64. We use $\varepsilon=0.1$ and $B = 100$ for the estimated loss gradients, and optimize weights using Adam \citep{kingma15adam}. 

In left column of Figure \ref{fig:semi-supervised-exps}, we report the test clustering error (evaluated on mini-batches of the same size), for each of the models, and data sets, corresponding to the choice of $n_\ell$ and $k_w$. We compare the performance of each model to a baseline, consisting of the exact same architecture (plus a linear head mapping the embedding to logits), trained to minimize the cross entropy loss. 

In addition, we evaluate all models on a down-stream (transfer-learning) classification task, by learning a linear probe on top of the frozen model, trained on hold-out data with all classes present. The results are depicted in the right hand column of Figure \ref{fig:semi-supervised-exps}. Further information regarding the experimental details can be found in the Appendix.


\begin{figure*}[t]
     \centering
     \begin{subfigure}[t]{0.35\textwidth}
         \centering
         \includegraphics[width=\textwidth]{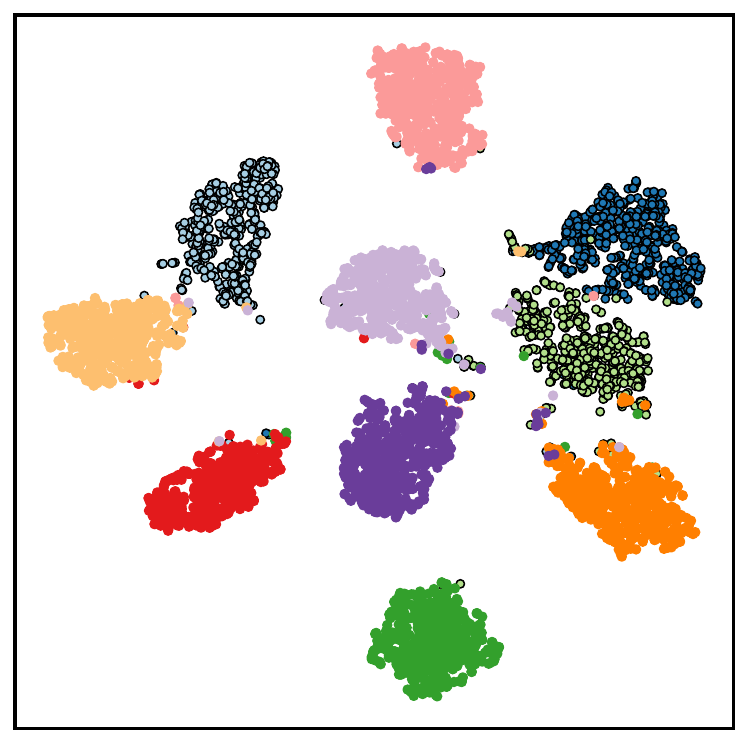}
         \label{subfig:mnist_tsne}
     \end{subfigure}\hspace{3em}
     \hfill
     \begin{subfigure}[t]{0.45\textwidth}
         \centering
         \includegraphics[width=\textwidth]{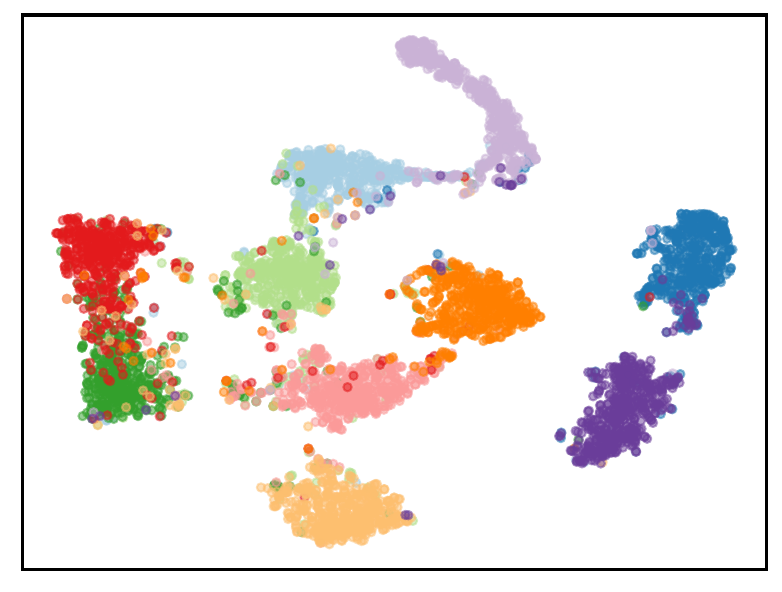}
         \label{subfig:cifar10_tsne}
     \end{subfigure}
     \hfill
     \vspace{-0.3cm}
     \caption{\textbf{(left)} t-SNE of learnt embeddings for the CNN trained on MNIST with $k_w=3$ withheld classes (highlighted). \textbf{(right)} t-SNE of learnt embeddings for the ResNet trained on CIFAR-10.}
     \label{fig:tsnes}
\end{figure*}


We observe that learning through clustering allows to find a representation where class semantics are easily recoverable from the local topology. On Cifar-10, our proposed approach strikingly achieves a lower clustering error in the most challenging setting ($n_\ell = 1000$ labelled examples and $k_w=6$ withheld classes) than the classification-based baseline in the most lenient setting ($n_\ell = 10,000$ labelled examples all classes observed). Importantly, this advantage is not limited to clustering metrics: learning through clustering can also lead to lower down-stream classification error, with the gap being most pronounced when few labelled examples are available. 

Moreover, besides this pronounced improvement in data efficiency, we found  that our method is capable of clustering classes for which no labelled examples are seen during training (see Figure~\ref{fig:tsnes}, left). Therefore, investigating potential applications of learning through clustering to zero-shot and self-supervised learning are promising avenues for future work.
\newpage

\section*{Acknowledgements}

We thank Jean-Philippe Vert for discussions relating to this work, Simon Legrand for conversations relating to the CLEPS computing cluster (Cluster pour l’Expérimentation et le Prototypage Scientifique), and Matthieu Blondel for providing references regarding learning via Kirchhoff's theorem. We also thank the reviewers and Kyle Heuton, for providing helpful corrections for the camera-ready paper. We also acknowledge support from the French government under the management of the Agence Nationale de la Recherche as part of the “Investissements d’avenir” program, reference ANR19-P3IA-0001 (PRAIRIE 3IA Institute), as well as from the European Research Council (grant SEQUOIA 724063), and a monetary \textit{don} from Google.

\bibliography{main}
\bibliographystyle{apalike}

\clearpage

\section{Broader impact}

This submission focuses on foundational and exploratory work, with application to general machine learning techniques. We propose methods to extend the range of operations that can be used in end-to-end differentiable systems, by allowing to include clustering as a differentiable operation in machine learning models. We do not foresee societal consequences that are specifically related to these methods, beyond those that are associated with the field in general.

\section{Reproducibility and Licensing}

Our experiments use data sets that are already open-sourced and cited in the references. All the code written for this project and used to implement our experiments is available at:

\url{https://github.com/LawrenceMMStewart/DiffClust_NeurIPS2023},

and distributed under a Apache 2.0. license.

\section{Limitations}
At present, our implementation of Kruskal's algorithm is incompatible with  processing very large batch sizes at train time. However, as our method can use mini-batches, it is suitable for training deep learning models on large data sets. For example, we use a batch size of 64 (similar to that for standard classification models) for all experiments detailed in our submission. Other methods that focus on cluster assignment, rather than pairwise clustering, often have an  $n \times k$ parametrization, whose efficiency will depend on the comparison between $n$ (batch size) and $k$ (number of clusters). 

At inference time this is not the case, since gradients need not be back-propagated hence, any implementation of Kruskal's algorithm can be used such as the union-find implementation. Faster GPU compatible implementations of Kruskal's algorithm for training are certainly possible ; improvements could be made using a binary heap or a GPU parallelized merge sort. In many learning experiments the batch sizes are typically of values $\{32, 64, 128, 256, 512\}$, so our current implementation is sufficient, however, for some specific applications (such as bio-informatics experiments requiring tens of thousands of points in a batch), it would be necessary to improve upon the current implementation. This is mainly an engineering task relating to data-structures and GPU programming, and is beyond the scope of this paper.

Finally, our clustering approach solves a linear program whose argmax solution is a matroid problem with a greedy single-linkage criterion. Whilst we saw that this method is effective for differentiable clustering in both the supervised and semi-supervised setting, there may well be other linkage criteria which also take the form of LP's but whom are more robust to outliers and noise in the train dataset. This line of work is outside the scope of the paper, but is closely related and could help improve our differentiable spanning forests framework.

\section{Proofs of Technical Results}
\begin{proof}[Proof of Proposition~\ref{pro:partial_grad}]   

We show these properties successively

1) We have by Definition~\ref{defn:partial_fy} 
\[
\bar L_{\sf FY}(\theta ; p) = \min_{y\in \cC(p)} L_{{\sf FY}}(\theta; y)\, .
\]
We expand this
\[
    \bar L_{\sf FY}(\theta ; p) = \min_{y\in \cC(p)} \big\{F(\theta) - \langle y, \theta \rangle \big\} = F(\theta) - \max_{y\in \cC(p)} \langle y, \theta \rangle\, .
\]
As required, this implies, following the definitions given    
\[
        \bar L_{\sf FY}(\theta ; p) = F(\theta) - F(\theta; p)\,, \quad \text{where} \quad F(\theta; p) = \max_{y \in \cC(p)} \langle y, \theta \rangle,
\]

2) By linearity of derivatives and 1) above, we have 
\[
        \nabla_\theta \bar L_{\sf FY}(\theta ; p) = \nabla_\theta F(\theta) - \nabla_\theta F(\theta; p)= y^*(\theta) - y^*(\theta; p)\,, \quad \text{where} \quad y^*(\theta; p) = \argmax_{y \in \cC(p)} \langle y, \theta \rangle,
\]
Since the argmax of a constrained linear optimization problem is the gradient of its value. We note that this property holds almost everywhere (when the argmax is unique), and almost surely for costs with positive, continuous density, which we always assume (e.g. see the following).

3) By linearity of expectation and 1) above, we have 
        \[
        \nabla_\theta \bar L_{\sf FY, \varepsilon}(\theta; p) = \nabla_\theta \E[F(\theta + \varepsilon Z) - F(\theta  + \varepsilon Z; p)] = \nabla_\theta F_\varepsilon(\theta) - \nabla_\theta F_\varepsilon(\theta;p) =  y_\varepsilon^*(\theta) - y_\varepsilon^*(\theta; p)\,,
        \] 
using the definition        
\[       
y^*_\varepsilon(\theta; p) = \E[\argmax_{y \in \cC(p)} \langle y, \theta + \varepsilon Z \rangle]\, .
        \]
\end{proof}
\begin{proof}[Proof of Proposition~\ref{pro:ineq_fy}] By Jensen's inequality and the definition of the Fenchel-Young loss:
\begin{align*}
\bar L_{\sf FY, \varepsilon}(\theta; p) &= \E[
\min_{y\in \cC(p)} L_{{\sf FY}}(\theta + \varepsilon Z; y)]\\
&\leq
\min_{y\in \cC(p)}
\E[
  L_{{\sf FY}}(\theta + \varepsilon Z; y)]\\
  &= 
  \min_{y\in \cC(p)}
F_\varepsilon(y) - \langle \theta, y\rangle\\
&\leq  \min_{y \in \cC(p)} L_{\sf FY,\varepsilon}(\theta;y)\, .
\end{align*}
\end{proof}

\section{Algorithms for Spanning Forests}

As mentioned in Section \ref{sec:diffclust}, both $A^*_k(\Sigma)$ and $M^*_k(\Sigma)$ are calculated using Kruskal's algorithm \citep{kruskal1956shortest}. Our implementation of Kruskal's is tailored to our use: we first initialize both $A_k^*(\Sigma)$ and $M^*_k(\Sigma)$ as the identity matrix, and then sort the upper triangular entries of $\Sigma$. We build the maximum-weight spanning forest in the usual greedy manner, using $A^*_k(\Sigma)$ to keep track of edges in the forest and $M^*_k(\Sigma)$ to check if an edge can be added without creating a cycle, updating both matrices at each step of the algorithm. Once the forest has $k$ connected components, the algorithm terminates. This is done by keeping track of the number of edges that have been added at any time.

We remark that our implementation takes the form as a single loop, with each step of the loop consisting only of matrix multiplications. For this reason it is fully compatible with auto-differentiation engines, such as JAX \citep{jax2018github}, Pytorch \citep{pytorch} and TensorFlow \citep{tensorflow}, and suitable for GPU/TPU acceleration. Therefore, our implementation differs from that of the standard Kruskal's algorithm, which used a disjoint union-find data structure (and hence is not compatible with auto-differentiation frameworks).

\subsection{Constrained Spanning Forests}
As an heuristic way to solve the constrained problem detailed in Section \ref{defn:constr_forest}, we make the modifications below to our implementation of Kruskal's, under the assumption that $M_\Omega$ represents {\em valid} clustering information (i.e. with no contradiction):

\begin{enumerate}
    \item \textbf{Regularization} (Optional) : It is possible to bias the optimization problem over spanning forests to encourage or discourage edges between some of the nodes, according to the clustering information. Before performing the sort on the upper-triangular of $\Sigma$, we add a large value to all entries of $\Sigma_{ij}$ where $(M_\Omega)_{ij}=1$, and subtract this same value from all entries of $\Sigma_{ij}$ where $(M_\Omega)_{ij}=1$. Entries $\Sigma_{ij}$ corresponding to where $(M_\Omega)_{ij} = \star$ are left unchanged. This biasing ensures that any edge between points that are constrained to be in the same cluster will always be processed before unconstrained edges. Similarly, any edge between points that are constrained to not be in the same cluster, will be processed after unconstrained edges. In most cases, i.e. when all clusters are represented in the partial information, such as when $\Omega = [n] \times [n]$ (full information), this is not required to solve the constrained linear program, but we have found that this regularization was helpful in practice.\\
    
    \item \textbf{Constraint enforcement} : We ensure that adding an edge does not violate the constraint matrix. In other words, when considering adding the edge $(i,j)$ to the existing forest, we  check that none of the points in the connected component of $i$ are forbidden from joining any of the points in the connected component of $j$. This is implemented using further matrix multiplications and done alongside the existing check for cycles. The exact implementation is detailed in our code base.
\end{enumerate}

\section{Existing Literature on Differentiable Clustering}

As discussed in Section \ref{sec:intro}, there exists many approaches which use clustering during gradient based learning, but these approaches typically use clustering in an offline fashion in order to assign labels to points. The following methods aim to learn through a clustering step (i.e. gradients back-propagate through clustering):

\citet{yang2017towards} use a bi-level optimization procedure (alternating between optimizing model weights and centroid clusters). They reported attaining 83\% label-wise clustering accuracy on MNIST using a fully-connected deep network. Our method differs from this approach as it is allows for end-to-end online learning.

\citet{genevay2019differentiable} cast k-means as an optimal transport problem, and uses entropic regularization for smoothing. Reported a 85\% accuracy on MNIST and 25\% accuracy on CIFAR-10 with a CNN.

\section{Additional Experimental Information}

\begin{figure*}[ht]
     \centering
     \begin{subfigure}[h]{0.3\textwidth}
         \centering
         \includegraphics[width=\textwidth]{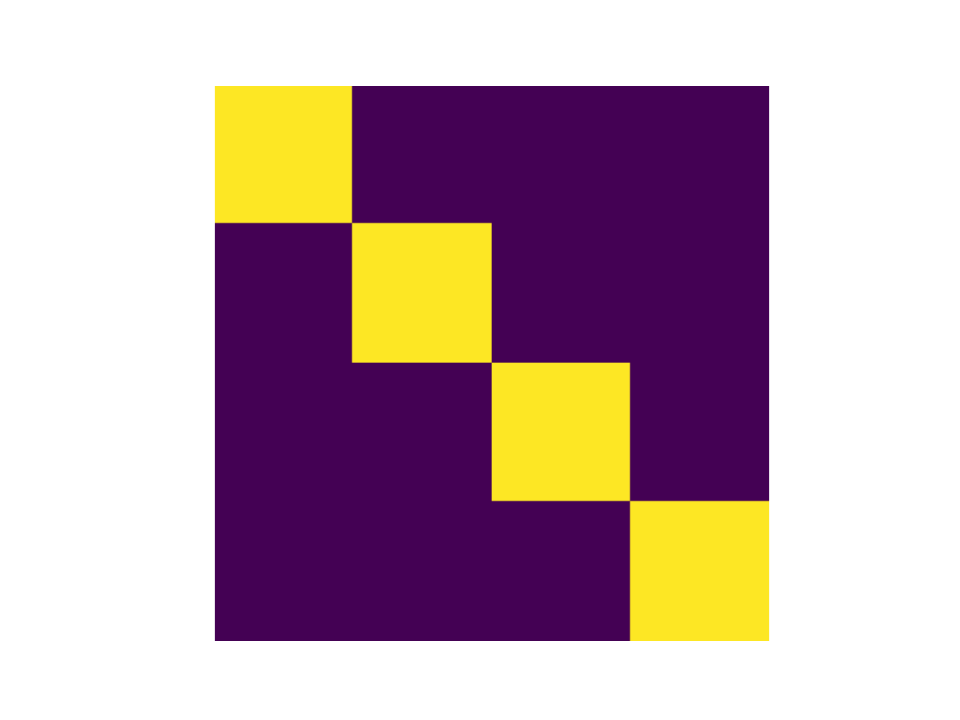}
         \label{subfig:signal_equiv}
     \end{subfigure}
     \hfill
     \begin{subfigure}[h]{0.3\textwidth}
         \centering
         \includegraphics[width=\textwidth]{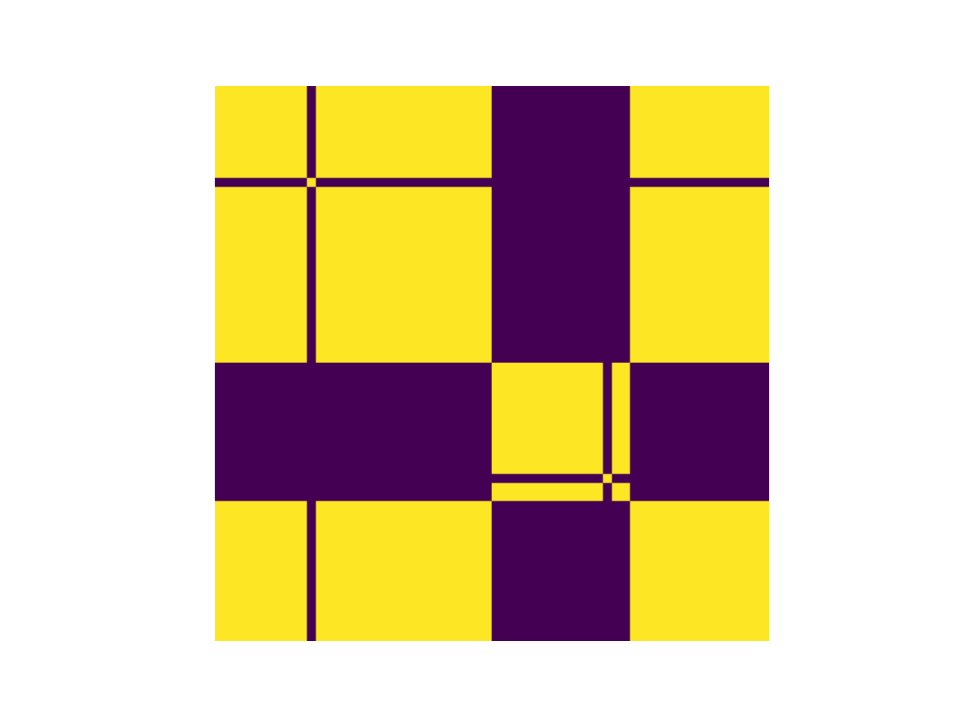}
         \label{subfig:noise_equiv}
     \end{subfigure}
     \hfill
      \begin{subfigure}[h]{0.3\textwidth}
     \centering
     \includegraphics[width=\textwidth]{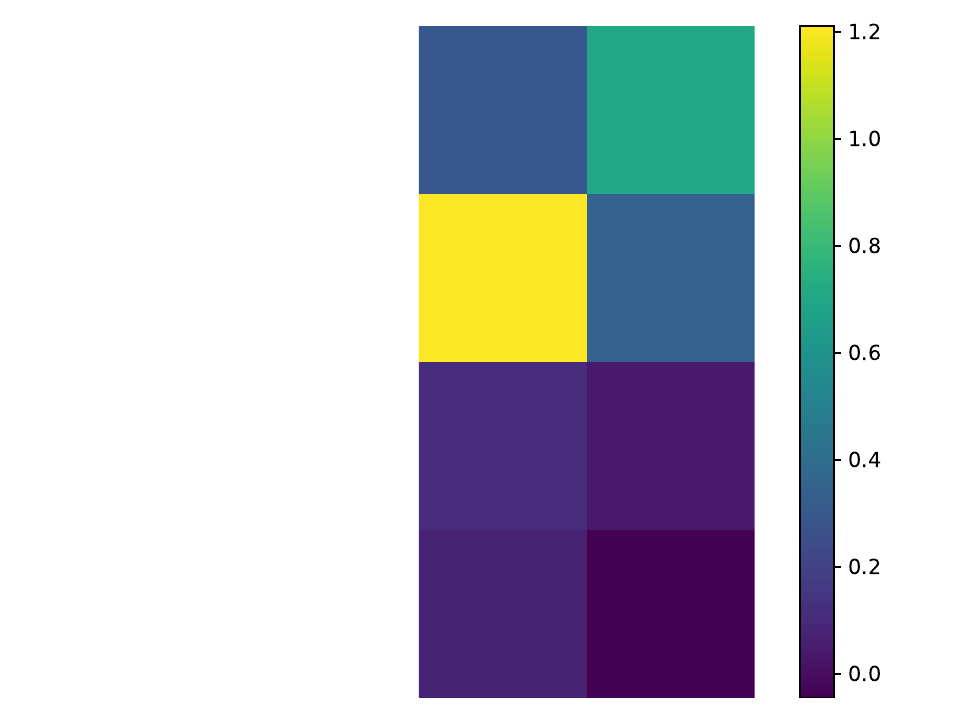}
     \label{subfig:theta}
     \end{subfigure}
     \hfill
     \vspace{-0.3cm}
     \caption{Leftmost figure: $M_{signal}$, \quad  Center Figure: $M_4^*(\Sigma)$,  \quad Rightmost Figure: $\theta$ after training.}
     \label{fig:linearexp}
\end{figure*}


      

We provide the details of the synthetic denoising experiment depicted in Figure~\ref{fig:clustering} and described in Section \ref{sec:supervised}. We create the signal data $X_{\text{signal}}\in \mathbb{R}^{60\times 2}$ by sampling iid. from four isotropic Gaussians (15 points coming from each of the Gaussians) each having a standard deviation of $0.2$. We randomly sample the means of the four Gaussians; for the example seen in Section \ref{sec:supervised} the sampled means were:

\begin{equation*}
    \begin{pmatrix}
     0.97627008 &  4.30378733 \\ 
    2.05526752 & 0.89766366 \\
     -1.52690401 & 2.91788226 \\
     -1.24825577 & 7.83546002 \\
    \end{pmatrix}.
\end{equation*} 


Let $\Sigma_{\text{signal}}$ be the pairwise euclidean similarity matrix corresponding to $X_{\text{signal}}$, and furthermore let $M_{\text{signal}} 
\defeq M^*_4(\Sigma_{\text{signal}})$ be the equivalence matrix corresponding to the signal ($M_{\text{signal}}$ will be the target equivalence matrix to learn).

We append an additional two `noise dimensions'  to $X_{\text{signal}}$ in order to form the train data $X\in \mathbb{R}^{60\times 4}$, where the noise entries were sampled iid from a continuous unit uniform distribution. Similarly letting $\Sigma$ be the pairwise euclidean similarity matrix corresponding to $X$, we calculate $M_4^*(\Sigma)\not = M_{\text{signal}}$. Both the matrices $M_{\text{signal}}$ and  $M_4^*(\Sigma)$ are depicted in Figure \ref{fig:linearexp} ; we remark that adding the noise dimensions leads to most points being assigned one of two clusters, and two points being isolated alone in their own clusters. We also create a validation set of equal size (in exactly the same manner as the train set), to ensure the model has not over-fitted to the train set.

The goal of the experiment is to learn a linear transformation of the data that recovers $M_{\text{signal}}$ i.e. a denoising, by minimizing the partial loss. There are multiple solutions to this problem, the most obvious being a transformation that removes the last two noise columns from $X$:

$$ \theta^* = \begin{pmatrix} 1 & 0 \\ 0 & 1 \\ 0 & 0 
\\  0 & 0  \end{pmatrix}\, , \quad \text{for which}\quad X \theta^* = X_{\text{signal}} $$

For any $\theta \in \mathbb{R}^{4\times 2}$, we define $\Sigma_\theta$ to be the pairwise similarity matrix corresponding to $X\theta$, and $M_4^*(\Sigma_\theta)$ to its corresponding equivalence matrix. Then the problem can be summarized as:

\begin{equation}
    \min_{\theta\in\mathbb{R}^{4\times 2}} \bar L_{\sf FY, \varepsilon}(\Sigma_\theta, M_{\text{signal}}).
\end{equation}

We initialized $\theta$ from a standard Normal distribution, and minimized the partial loss via stochastic gradient descent, with a learning rate of $0.01$ and batch size 32.

For perturbations, we took $\varepsilon =0.1$ and $B=1000$, where $\varepsilon$ denotes the noise amplitude in randomized smoothing and $B$ denotes the number of samples in the Monte-Carlo estimate. The validation clustering error converged to zero after 25 gradient batches. We verify that the $\theta$ attained from training is indeed learning to remove the noise dimensions (see Figure \ref{fig:linearexp}).

\subsection{Supervised Differentiable Clustering}\label{append:supervisedlearning}

\begin{figure*}[ht]
     \centering
     \begin{subfigure}[h]{0.32\textwidth}
         \centering
         \includegraphics[width=\textwidth]{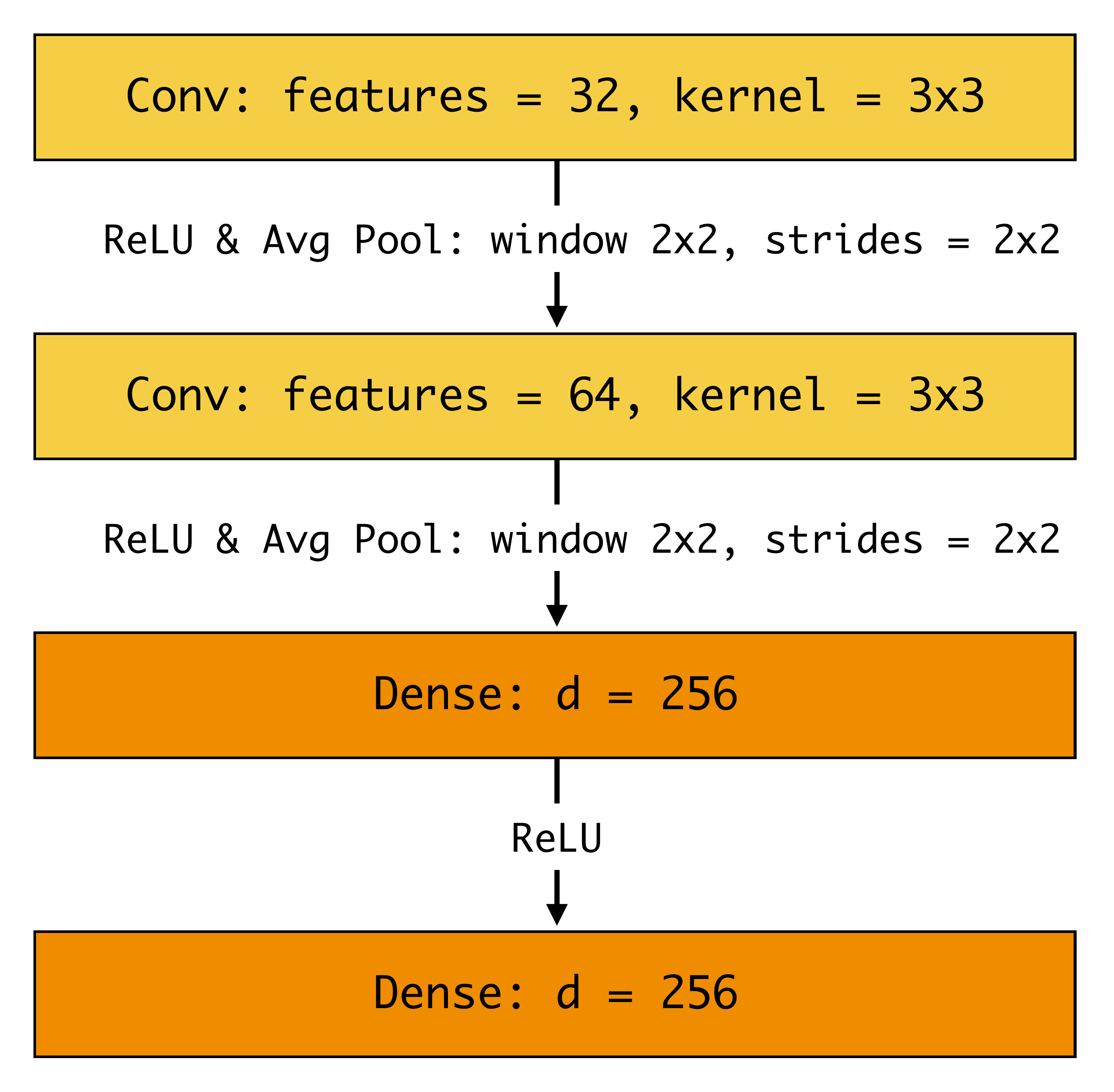}
         \label{subfig:archbb}
     \end{subfigure}
     \hfill
     \begin{subfigure}[h]{0.32\textwidth}
         \centering
         \includegraphics[width=\textwidth]{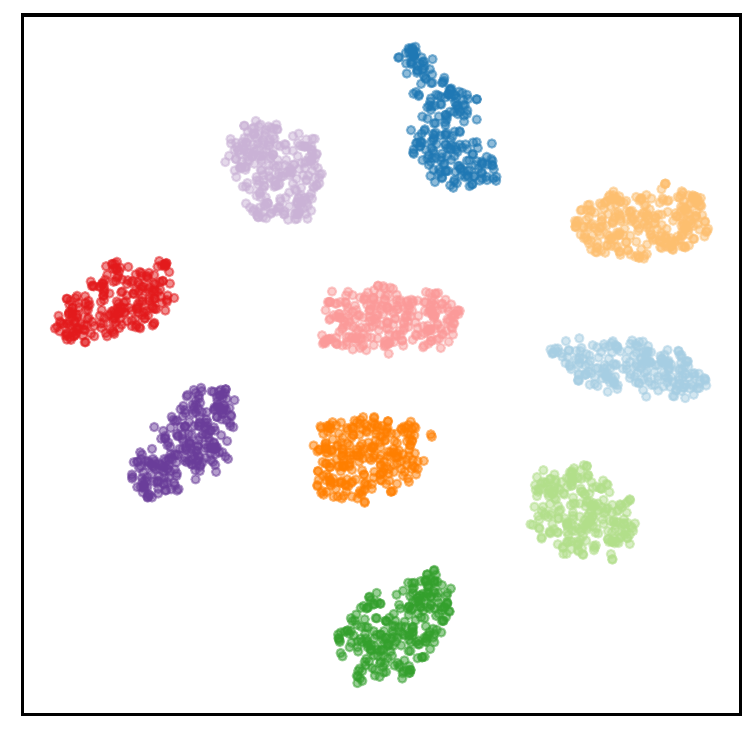}
         \label{subfig:tSNE_mnist_train}
     \end{subfigure}
     \hfill
          \begin{subfigure}[h]{0.32\textwidth}
         \centering
         \includegraphics[width=\textwidth]{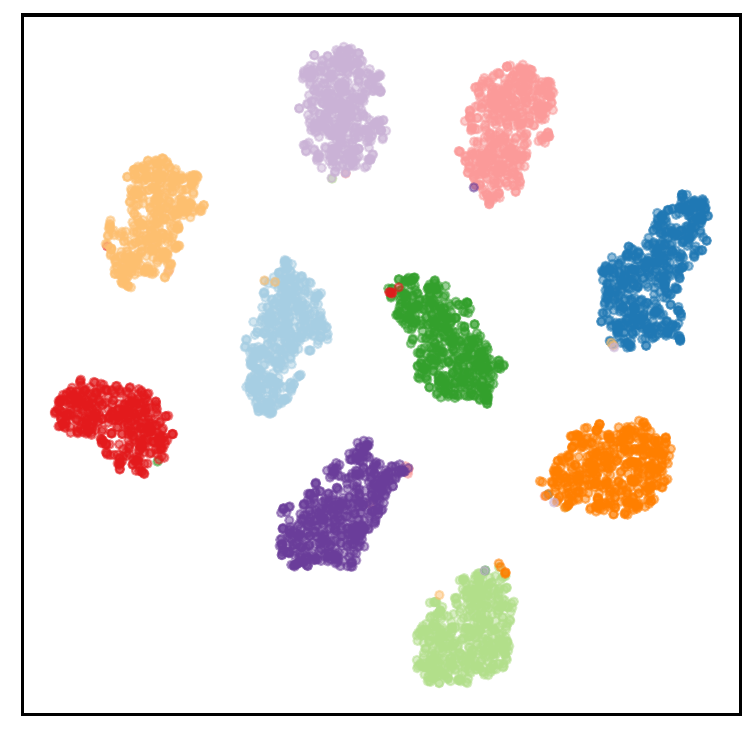}
         \label{subfig:tSNE_mnist_val}
     \end{subfigure}
     \hfill
     \vspace{-0.3cm}
     \caption{\textbf{(left)} Architecture of the CNN,  \textbf{(middle)} t-SNE visualization of train data embeddings, \textbf{(right)} tSNE visualization of validation data embeddings.}
     \label{fig:CNNsupervised}
\end{figure*}

As mentioned in Section \ref{sec:supervised}, our method is able to cluster classical data sets such as MNIST and Fashion MNIST. We trained a CNN with the LeNet-5 architecture \cite{lecun1998gradient} using our proposed partial loss as the objective function. The exact details of the CNN architecture are depicted in Figure \ref{fig:CNNsupervised}. For this experiment and all experiments described below, we trained on a single Nvidia V100 GPU  ;  training the CNN with our proposed pipeline took $<15$ minutes.

The model was trained for $30$k gradient steps on mini-batches of size $64$. We used the Adam optimizer \citep{kingma15adam} with learning rate $\eta = 3 \times 10^{-4}$, momentum parameters $(\beta_1, \beta_2) = (0.9, 0.999)$, and an $\ell_2$ weight decay of $10^{-4}$. We validated / tested the model using the zero-one error between the true equivalence matrix and the equivalence matrix corresponding to the output of the model. We used an early stopping of 10k steps (i.e. training was stopped if the validation clustering error did not improve over 10k steps). For efficiency (and parallelization), we also computed this clustering error batch-wise with batch-size 64. As stated in Section \ref{sec:supervised}, we attained a batch-wise clustering precision of 0.99 for MNIST and $0.96$ on Fashion MNIST.

The t-SNE visualizations of the embedding space of the model trained on MNIST for a collection of train and validation data points are depicted in Figure \ref{fig:CNNsupervised}. It can be seen that the model has learnt a distinct cluster for each of the ten classes.

In similar fashion, we trained a ResNet \citep{he2016deep} on the Cifar-10 data set. The exact model architecture is similar to that of ResNet-50, but with minor modifications to the input convolutions for compatibility with the dimensions of Cifar images, and is detailed in the code base.

The training procedure was identical to that of the CNN, except the model was trained for 75k steps (with early stopping), and used the standard data augmentation methods for Cifar-10, namely: a combination of four-pixel padding, random flip followed by a random crop. As mentioned in Section \ref{sec:supervised}, the model achieved a batch-wise clustering test precision of $0.933$.


\subsection{Semi-Supervised Differentiable Clustering} 
As mentioned in Section \ref{sec:semisupervised}, we show that our method is particularly useful in settings where labelled examples are scarce, even in the particularly challenging case of having no labelled examples for some classes. Our approach allows a model to leverage the semantic information of unlabeled examples when trying to predict a target equivalence matrix $M_\Omega$ ; this is owing to the fact that the prediction of a class for a single point depends on the values of all other points in the batch, which is in general not the case for more common problems such as classification and regression.

To demonstrate the performance of our approach, we assess our method on two tasks:

\begin{enumerate}
    \item \textbf{Semi-Supervised Clustering:} Train a model to learn an embedding of the data which leads to a good clustering error. We can compare our methodology to that of a baseline model trained using the cross-entropy loss. This is to check that our model has leveraged information from the unlabeled data and that our partial loss is indeed leading to good clustering performance.
    \item \textbf{Downstream Classification:} Assess the trained model's capacity to serve as a backbone in a downstream classification task (transfer learning), where its weights are frozen and a linear layer is trained on top of the backbone.
\end{enumerate}

We describe our data processing for both of these tasks below.

\subsubsection{Data Sets}
In our semi-supervised learning experiments, we divided the standard MNIST and Cifar-10 train splits in the following manner:

\begin{itemize}
    \item We create a balanced hold-out data set consisting of 1k images (100 images from each of the 10 classes). This hold-out data set will be used to assess the utility of the frozen clustering model on a downstream classification problem.  \
    \item From the remaining 59k images, we select a labeled train set of $n_\ell$ points (detailed in Section \ref{sec:semisupervised}). Our experiments also vary $k_w \in \{0, 3, 6\}$, the number of digits to withhold all labels from. For example, if $k_w=3$, then the labels for the images corresponding to digits $\{0, 1, 2\}$ will never appear in the labeled train data.
\end{itemize}

\subsubsection{Semi-Supervised Clustering Task}

For each of the choices of $n_\ell$ and $k_w$, we train the architectures described in Section \ref{append:supervisedlearning} using the following two approaches:

\begin{enumerate}
    \item \textbf{Ours:}  The model is trained on mini-batches, where half the batch is labeled data and half the batch is unlabeled data (i.e. a semi-supervised learning regime), to minimize the partial loss. 
    \item \textbf{Baseline:} The baseline model shares the same architecture as that described in Section \ref{append:supervisedlearning}, but with an additional dense layer with output dimension 10 (the number of classes). We train the model using mini-batches consisting of labeled points, minimizing the cross-entropy loss. The training regime is fully-supervised learning (classification). The baseline backbone refers to all of the model, minus the dense output layer.
\end{enumerate}

Both models were trained with mini-batches of size 64, with points sampled uniformly without replacement. All hyper-parameters and optimization metrics were identical to those detailed in Section \ref{sec:supervised}. For MNIST, we repeated training for each model with five different random seeds $s \in [5]$ (and with three random seeds $s\in [3]$ for Cifar), in order to report population statistics on the clustering error.

\subsubsection{Transfer-Learning: Downstream Classification}

In this task both models are frozen, and their utility as a foundational backbone is assessed on a downstream classification task using the hold-out data set. We train a linear (a.k.a dense) layer on top of both models using the cross-entropy loss. We refer to this linear layer as the downstream head. Training this linear head is equivalent to performing multinomial logistic regression on the features of the model. 

To optimize the weights of the linear head we used the SAGA optimizer \citep{defazio2014saga}. The results are depicted in Figure \ref{fig:semi-supervised-exps}. It can be seen that training a CNN backbone using our approach with just 250 labels leads to better downstream classification performance than the baseline trained with 5000 labels. It is worth remarking that the baseline backbone was trained on the same objective function (cross-entropy) and task (classification) as the downstream problem, which is not the case for the backbone corresponding to our approach. This highlights how learning `cluster-able embeddings' and leveraging unlabeled data can be desirable for transfer learning.

\end{document}